\newcolumntype{I}{!{\vrule width 3pt}}
\newlength\savedwidth
\newlength\savewidth
\newcommand\blfootnote[1]{%
\begingroup
\renewcommand\thefootnote{}\footnote{#1}%
\addtocounter{footnote}{-1}%
\endgroup
}
\renewcommand{\texttt}[1]{ $ {{\tt #1} } $}
\definecolor{orange}{RGB}{255,127,0}
\def\eqref#1{equation~\ref{#1}}
\def\1{\bm{1}}
\def\vb{{\bm{b}}}
\def\vc{{\bm{c}}}
\def\vo{{\bm{o}}}
\def\vp{{\bm{p}}}
\def\vq{{\bm{q}}}
\def\vx{{\bm{x}}}
\def\vz{{\bm{z}}}
\def\mW{{\bm{W}}}
\DeclareMathAlphabet{\mathsfit}{\encodingdefault}{\sfdefault}{m}{sl}
\SetMathAlphabet{\mathsfit}{bold}{\encodingdefault}{\sfdefault}{bx}{n}
\def\sC{{\mathbb{C}}}
\def\sR{{\mathbb{R}}}
\def\sZ{{\mathbb{Z}}}
\def\emA{{A}}
\def\emN{{N}}
\newcommand{\sigmoid}{\sigma}
\newtheorem{theorem}{Theorem}
\definecolor{orange}{RGB}{255,127,0}
\begin{document}

\title{Explore Faster Localization Learning For Scene Text Detection}

\author{
{\large
Yuzhong Zhao$^1$},
{\large
Yuanqiang Cai$^2$},
{\large
Weijia Wu$^3$},
{\large
Weiqiang Wang$^1$$^\dagger$}
}

\maketitle
\begin{abstract}

Generally pre-training and long-time training computation are necessary for obtaining a good-performance text detector based on deep networks. In this paper, we present a new scene text detection network (called FANet) with a Fast convergence speed and Accurate text localization. The proposed FANet is an end-to-end text detector based on transformer feature learning and normalized Fourier descriptor modeling, where the Fourier Descriptor Proposal Network and Iterative Text Decoding Network are designed to efficiently and accurately identify text proposals. Additionally, a Dense Matching Strategy and a well-designed loss function are also proposed for optimizing the network performance. Extensive experiments are carried out to demonstrate that the proposed FANet can achieve the SOTA performance with fewer training epochs and no pre-training. When we introduce additional data for pre-training, the proposed FANet can achieve SOTA performance on MSRA-TD500, CTW1500 and TotalText. The ablation experiments also verify the effectiveness of our contributions.

\end{abstract}

\blfootnote{$^1$ University of Chinese Academy of Sciences, China.}
\blfootnote{$^2$ Beijing University of Posts and Telecommunications, China.}
\blfootnote{$^3$ Zhejiang Unversity, China.}
\blfootnote{$^\dagger$ Corresponding author. (wqwang@ucas.ac.cn)}

%%%%%%%%% BODY TEXT
\section{Introduction}

Scene text detection is an important task of computer vision, and a basis of various text-related applications, so many researchers are concerned about the issue~\cite{DBLP:journals/pami/YeD15,DBLP:journals/fcsc/ZhuYB16,DBLP:journals/ijdar/LiuMP19,wu2020texts,wu2020synthetic}. The rise and wide application of deep learning make great progress in scene text detection, so the best performance on benchmark datasets is refreshed constantly. 
However, most of the state-of-the-art methods~\cite{DBLP:conf/iccv/WangXSZWLYS19,DBLP:conf/cvpr/RaisiN0WZ21,DBLP:conf/cvpr/ZhuCLKJZ21,DBLP:conf/iccv/HeGDG17,wu2019textcohesion,wu2020selftext} rely on long-time training to achieve a good performance. 
Generally researchers use related large datasets for \textit{long-time pre-training}, and then finetune the network on the target dataset, or directly carry out \textit{long-time training} on the target dataset. These approaches are not suitable for scenarios that require to rapidly generate models or no large dataset for pre-training.

\begin{figure}[t]
 \centering  %width=0.7\linewidth
 \includegraphics[width=3.3in,height=2.6in]{./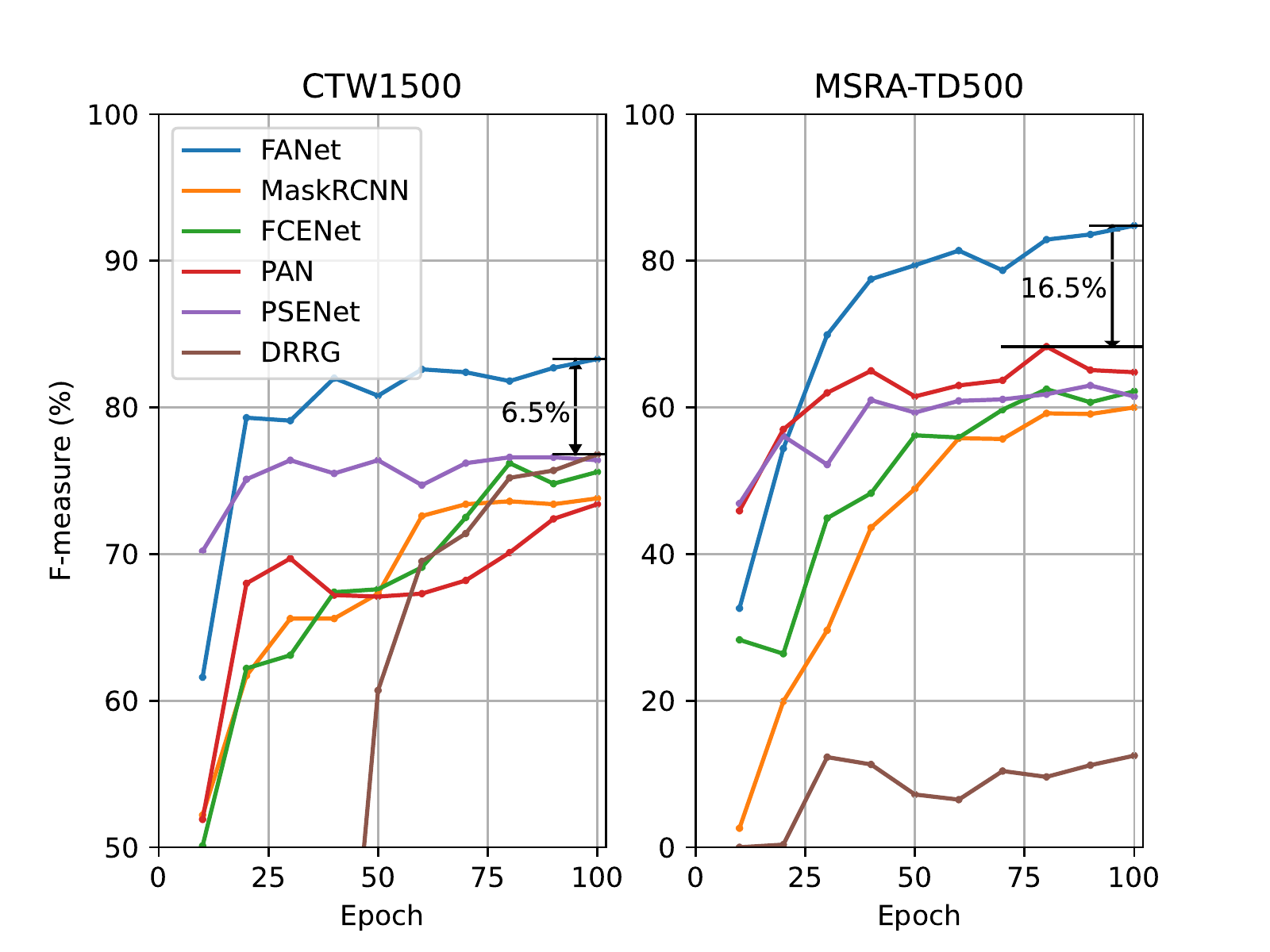}
 \caption{The comparison of convergence performance between the proposed FANet and other state-of-the-art (SOTA) methods. After the training of 100 epochs, the FANet outperforms current best SOTA method by $6.5\%$ $(83.3\% vs. 76.8\%)$ and $16.5\%$ $(84.8\% vs. 68.3\%)$ based on F-measure on datasets CTW1500 and MSRA-TD500 respectively.}
\label{fig:convergence}
%\vspace{-2mm}
\end{figure}

\begin{figure*}[htbp]
\label{fig:arc}
 \centering  %width=0.7\linewidth
 \includegraphics[width=7.in]{./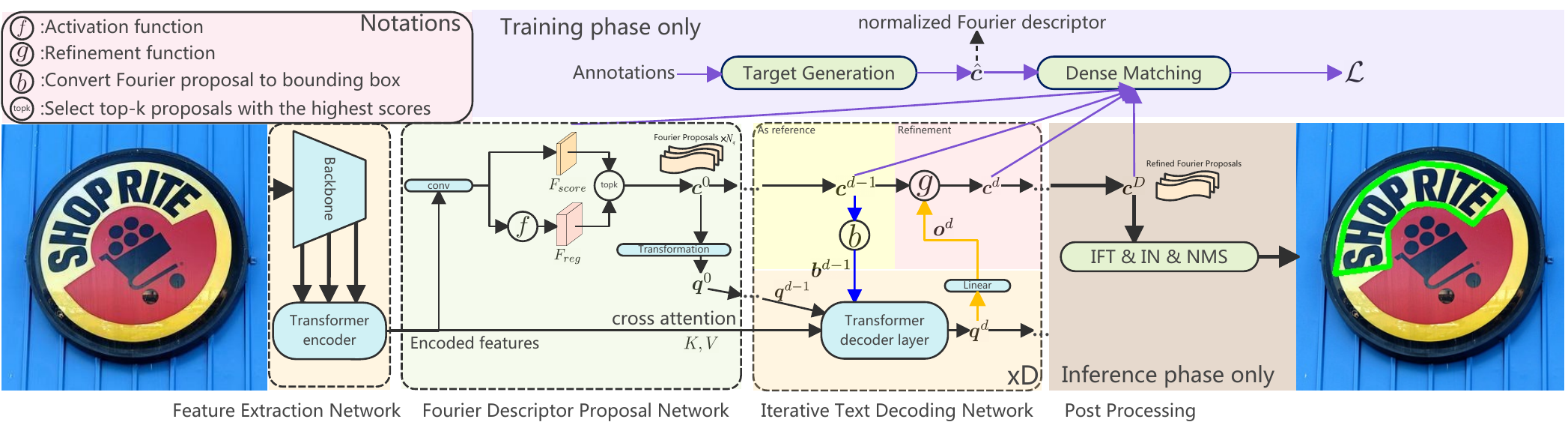}
 \caption{Overview of the proposed FANet. $\vc^d$ denotes the Fourier proposal predicted by $d$-th transformer decoder layer, $\vq^{d-1}$ denote the input query for $d$-th transformer decoder layer, $d=1,\cdots,D$, where $D$ is the number of decoder layers (6 in our experiments). As some special cases, $\vc^0$ denotes the Fourier proposal predicted by Fourier Descriptor Proposal Network (FDPN), $\vq^{D}$ is the encoded memory of the last transformer decoder layer. We only describe the calculation process of a single proposal $\vc^{D}$ in the network for simplicity. In the Iterative Text Decoding Network (ITDN), the {\color{blue}blue} top-down arrow indicates that the Fourier descriptor is used as the reference location for the Multi-Scale Deformable Attention module, the {\color{orange}orange} bottom-up arrow indicates that the offset predicted by the transformer decoder layer is used to refine the Fourier proposals.}
\label{fig:arc}
%\vspace{-2mm}
\end{figure*}

Inspired by the feature learning ability of the transformer \cite{DBLP:conf/nips/VaswaniSPUJGKP17} and deformable DETR\cite{DBLP:conf/iclr/ZhuSLLWD21} and the representation ability of Fourier descriptor for arbitrary contours~\cite{DBLP:conf/cvpr/ZhuCLKJZ21}, we present a scene text detection network (called FANet) with a Fast convergence speed and Accurate text localization, where the transformer is combined with Fourier descriptor to localize arbitrary-shaped text regions. The DETR based metheds~\cite{DBLP:conf/eccv/CarionMSUKZ20} can scale-invariantly localize the rectangular target regions. The Fourier descriptor proposed by ~\cite{DBLP:conf/cvpr/ZhuCLKJZ21} can model target regions of arbitrary contours, but it cannot be embedded into the DETR detection framework. Thus, we propose a normalization method for Fourier descriptor to enable the DETR based methods to predict the normalized text regions of arbitrary shape. However, we find that the established framework based on the proposed normalization method has slow convergence and low accuracy due to two aspects of reasons, i.e., (1) \textit{the change of regression target} makes the deformable DETR component become sub-optimal and (2) the \textit{Hungarian Matching Strategy} hinders the rapid convergence of the network.
Correspondingly, we first make some effective changes to the structure of deformable DETR. Concretely, we propose a Fourier Descriptor Proposal Network (FDPN) to get better  candidates for the text decoder. Then, we build an Iterative Text Decoding Network (ITDN) to iteratively refine Fourier proposals. Finally, we propose a well-designed loss function to optimize the descriptor representations and calculate the matching cost. Additionally, we propose a Dense Matching Strategy (DMS) to greatly speed up the convergence and improve the 
detection accuracy within fewer training epochs.
%Together with the above components and strategies, we improve the performance of our network to a new level. 
% We test the performance of our network on several benchmark datasets with limited training epochs, and achieve the state-of-the-art (SOTA) performance without large-scale dataset pre-training.
As shown in Figure~\ref{fig:convergence}, the proposed FANet can obtain an F-measure of $83.3\%$ and $84.8\%$ respectively, after training only 100 epochs on datasets CTW1500~\cite{DBLP:journals/pr/LiuJZLZ19} and MSRA-TD500~\cite{DBLP:conf/cvpr/YaoBLMT12} without pre-training, which outperforms current best SOTA methods~\cite{DBLP:conf/cvpr/ZhangZHLYWY20,DBLP:conf/iccv/WangXSZWLYS19} by $6.5\%$ $(83.3\%vs.76.8\%)$ and $16.5\%$ $(84.8\%vs.68.3\%)$ respectively. The main contributions of this paper are summarized as follows.  
\begin{itemize}
    % \item We design a progressive optimization strategy, it can guide each block from the complete learning of text instance to the offset learning between adjacent layers in the decoding stage, to reduce the optimization difficult and accelerate the convergence.
    % \item We develop a Fourier parameter proposal head in the encoding stage, which can provide a sampling proposal space for the decoder and guide the optimization direction by reducing the randomness of its parameter space.
    \item We present a scene text detection network FANet with fast convergence speed and accurate localization, which uses the transformer to learn text features, and the normalized Fourier descriptor to represent text regions. The proposed FANet has achieved SOTA performances on multiple public benchmarks,$e.g.$, MSRA-TD500,CTW1500 and TotalText.
    % \item We propose a normalize method for the Fourier coefficient, which enables transformer to predict text instances of arbitrary shape in a regression manner.
    \item We make many changes to the original deformable DETR, including Fourier Descriptor Proposal Network (FDPN), Iterative Text Decoding Network (ITDN) and well-designed loss function. These components can effectvely improve the convergence speed and accuracy of FANet.
    \item We propose a Dense Matching Strategy (DMS), which significantly improves the convergence speed and accuracy of FANet within fewer training epochs.
\end{itemize}

\section{Related Works}
\subsection{Transformer modeling}
The transformer~\cite{DBLP:conf/nips/VaswaniSPUJGKP17} with both self-attention and cross-attention mechanism has achieved great success in both machine translation~\cite{DBLP:conf/naacl/DevlinCLT19} and visual recognition~\cite{DBLP:conf/iclr/DosovitskiyB0WZ21}. For example, DETR \cite{DBLP:conf/eccv/CarionMSUKZ20} first adopts the transformer architecture for the object detection task. deformable DETR \cite{DBLP:conf/iclr/ZhuSLLWD21} extends DETR with a deformable attention module that reduces the training time significantly.
Some previous works also try to explore the potential of transformer on text spotting task.
Wu \emph{et al.}~\cite{wu2021bilingual,wu2022end} proposed to track and spot text in video with transformer sequence modeling.
\cite{DBLP:conf/cvpr/RaisiN0WZ21} adopts the transformer architecture in multi-orientation text detection for the first time, but it still suffers from the problems of requiring massive data for pre-training, slow convergence, poor performance and inadequate representation ability. By using better contour representation, feature extraction and network optimization methods, we make the proposed FANet based on transformer surpasses the SOTA text detection algorithm based on CNNs~\cite{DBLP:conf/cvpr/ZhuCLKJZ21,DBLP:conf/cvpr/DaiZ0C21,DBLP:conf/iccv/WangXSZWLYS19}.

\subsection{Text Region Representation}
Text regions can be modeled via per-pixel masks ~\cite{DBLP:conf/cvpr/WangXLHLY019,DBLP:conf/iccv/WangXSZWLYS19}, or modeled by parameters in specified representation spaces. For example, TextRay \cite{DBLP:conf/mm/WangCW020} represents the text contours in the polar system. ABCNet \cite{DBLP:conf/cvpr/LiuCSHJW20} introduced Bezier curves to parameterize curved texts. FCENet \cite{DBLP:conf/cvpr/ZhuCLKJZ21} represents the text instances in the Fourier domain, which allows to represent any closed continuous contour in robust and simple manners. In this paper, we further present a new normalized Fourier descriptor to represent the normalized text regions of arbitrary shapes, which makes it possible to embed the text representation based on Fourier descriptor into the detection framework based on transformers.

\section{Method Description}
\subsection{Overview}
As shown in Figure~\ref{fig:arc}, the proposed FANet mainly consists of three parts: Feature Extraction Network (FEN), Fourier Descriptor Proposal Network (FDPN) and Iterative Text Decoding Network (ITDN). For a given image, it is first encoded as features by the FEN, which consists of a backbone and a transformer encoder. The encoded features are then fed into the FDPN to obtain a set of arbitrary-shaped text contours represented as normalized Fourier descriptor and the object queries transformed from them. Further, the encoded features, the object queries and the $N_q$ Fourier proposals with the highest scores are jointly fed into the ITDN to obtain the refined Fourier proposals, where $N_q$ is the number of queries of the transformer decoder (300 in our experiments). Finally, we get the detection results after applying Inverse Fourier transform (IFT), Inverse Normalization (IN) and Non-Maximum Suppression (NMS) to the refined Fourier proposals.

\subsection{Fourier Descriptor Normalization}

\begin{figure}[t]
\label{fig:normalization}
 \centering  %width=0.7\linewidth
 \includegraphics[width=3.in]{./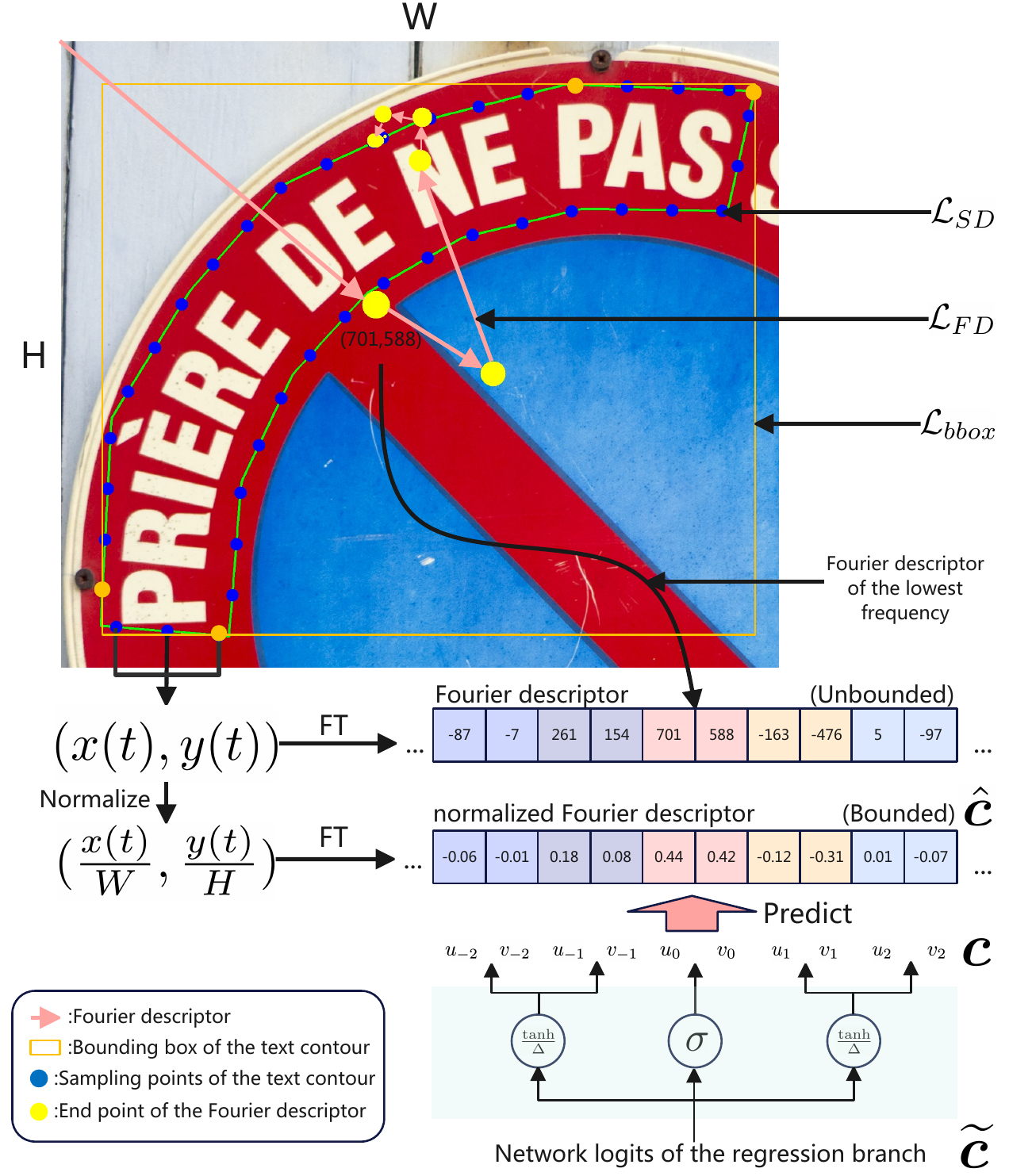}
 \caption{Description of the Fourier descriptor normalization.}
\label{fig:normalization}
%\vspace{-2mm}
\end{figure}

\quad \textbf{Target generation.} FCENet uses a complex-value function $z(t)=x(t)+iy(t)$, $t\in [0,1]$ to represent a text contour, where $(x(t),y(t))$ denotes the spatial coordinate of the text contour. To make the contour representation scale-invariant, we present a normalized form  $\mathcal{Z}(t)=\frac{x(t)}{W}+i\frac{y(t)}{H}=\mathcal{X}(t)+i\mathcal{Y}(t)$, where $H,W$ are the height and width of a given image. Since we cannot obtain the analytical form of text contour function $\mathcal{Z}$ in real scenario, we can discretize the continual function $\mathcal{Z}$ into $N$ points, i.e., $\{\mathcal{Z}(\frac{n}{N})\}$, $n\in\{0,...,N-1\}$. Correspondingly, we can calculate the Fourier descriptor of a normalized text contour by Discrete Fourier Transform (DFT), $i.e.$, 
\begin{align}
    \hat{c}_k&=\frac{1}{N}\sum_{n=0}^{N-1} \mathcal{Z}(\frac{n}{N})e^{-2\pi i k \frac{n}{N}},  k\in \Psi\label{eq:fouier},
\end{align}

where $\hat{c}_k=\hat{u}_k+i\hat{v}_k\in \sC$,  $\hat{u}_k$ and $\hat{v}_k$ are real part and  imaginary part of $\hat{c}_k$, $\Psi \triangleq \{-K,\cdots,0,\cdots,K\}$, $K$ is the highest frequency index reserved for the Fourier descriptor of the text contour ($K$=5 in our experiments). Thus, each ground-truth text contour can be represented as a normalized Fourier descriptor, $\hat{\vc}=[\hat{u}_{-K},\hat{v}_{-K},\cdots,\hat{u}_0,\hat{v}_0,\cdots,\hat{u}_{K},\hat{v}_{K}]\in R^{4K+2}$. We take normalized Fourier descriptor as regression targets. As shown in Figure~\ref{fig:normalization}, the values of original Fourier descriptor are generally large and unbounded, while the values of normalized Fourier descriptor all fall into a small region with fixed boundaries, $i.e.$, \begin{align}
    (\hat{u}_0,\hat{v}_0) &\in [0,1]^2,\\
    (\hat{u}_k,\hat{v}_k) &\in [-\frac{2}{\pi},\frac{2}{\pi}]^2,k\in \Psi,k\neq 0.
\end{align}
% The related proof is provided in supplementary material $\textbf{A}$.\\

\textbf{ New activation function.} Considering the above properties of normalized Fourier descriptor, the commonly used activation functions cannot meet our requirements. For example, the identity function is an unbounded function that does not fit the bounded regression target well. The sigmoid function $\sigmoid$ cannot output negative values. The range of $\tanh$ is much larger than our regression target, which results in too many invalid predictions. So we define a new activation function $f(\widetilde{\vc})$ suitable for the regression of proposed normalized Fourier descriptor as
\begin{align}
    \vc_i = f(\widetilde{\vc}_i)=\left\{
    \begin{array}{cc}
       \sigmoid(\widetilde{\vc}_i)  & i=2K,2K+1, \\
       \frac{\tanh(\widetilde{\vc}_i)}{\Delta} & i\in\Omega/\{2K,2K+1\},\\
    \end{array}\right.
\end{align}
where $\widetilde{\vc}\in \sR^{4K+2}$ is the stimulus, and $\vc\in \sR^{4K+2}$ is the prediction of regression branches, where $\Omega = \{0,\cdots,4K+1\}$ is the index set aligned with $\Psi$, $\Delta$ is a hyperparameter that controls the output range of function $f(\widetilde{\vc})$ and we set it to $\frac{\pi}{2}$ to best match the range of our regression target. As shown in Figure~\ref{fig:normalization}, we use the proposed activation function for the unbounded network logits of the regression branches where we need to predict the normalized Fourier descriptor by the networks, such as FDPN and ITDN.

\subsection{Fourier Descriptor Proposal Network}

We build a lightweight Fourier Descriptor Proposal Network (FDPN) following the Feature Extraction Network. As shown in Figure~\ref{fig:arc}, through a single convolution module, we predict two feature maps, $i.e.$, $F_{score}\in \sR^{B\times 2\times h_f\times w_f},F_{reg}\in \sR^{B\times (4K+2)\times h_f\times w_f}$, where $B$ is the batch size, $h_f$ and $w_f$ are the height and width of the feature map.
Each pixel on the feature map is assigned as an object query, which directly predicts a normalized Fourier descriptor and a corresponding score, $\textit{i.e.}$, a Fourier proposal. The top $N_q$ Fourier proposals with the highest scores are selected, and we convert them into the initial queries of the ITDN through a transformation layer, which consists of a linear layer and a position encoding layer as described in deformable DETR~\cite{DBLP:conf/iclr/ZhuSLLWD21}. Finally, the proposals and the queries are sent to the ITDN, and $N_q$ is also the number of queries of the transformer decoder.

\subsection{Iterative Text Decoding Network}
% \subsection{Iterative Fourier Descriptor Refinement}
As shown in Figure~\ref{fig:arc}, Iterative Text Decoding Network (ITDN) is composed of $D$ duplicate modules, each module is consist of a deformable transformer decoder layer and a refinement layer. We let each module refine the Fourier descriptor based on the prediction from the previous module and we select the output of the last module as the final predictions. The refinement target of FANet is the normalized Fourier descriptor while the refinement target of deformable DETR~\cite{DBLP:conf/iclr/ZhuSLLWD21} is normalized bounding boxes. To adapt to this change of regression target, we make some effective changes for the transformer decoder layer as follows: \\
(1) \textbf{Take the Fourier descriptor as the reference location for the Multi-Scale Deformable Attention Module}. 
% We use the fourier coefficient predicted by the previous decoder layer as the reference point of the multi-scale deformable attention module of the current decoder layer.
As shown in Figure~\ref{fig:arc}, in the calculation process of $d$-th transformer decoder layer, we first calculate the bounding box $\vb^{d-1}=(x,y,w,h)\in [0,1]^4$ of the Fourier proposal $\vc^{d-1}$ through a function $b$. Then we use $((p_x+x)w,(p_y+y)h)$ instead of $(p_x,p_y)$ as the sampling location for the Multi-Scale Deformable Attention (MSDeformAttn) module~\cite{DBLP:conf/iclr/ZhuSLLWD21}, where $(p_x,p_y)$ are the sampling offset predicted by the MSDeformAttn module. Such modifications make the sampled locations of the MSDeformAttn module related to the center and size of previously predicted text contour.\\
% We first convert the proposals of layer $i$ into the form of horizontal rectangular boxs, and then we input the horizontal rectangular boxs of layer $i$ and $Query_i$ to $Decoder_i$, in which the input horizontal rectangular boxs is used as the reference boxs of sampling points in the deformable multi-head cross attention module in the $Decoder_i$, as we believe that collecting information around proposals is often more effective than random collection.\\
(2) \textbf{Use multiplication based refinement instead of the original addition based refinement for the nondc component of Fourier descriptor}. As shown in Figure~\ref{fig:arc}, the refinement function maps a proposal $\vc^{d-1}\in \sR^{4K+2}$ and an offset prediction $\vo^{d}\in \sR^{4K+2}$ to a new proposal $\vc^{d}\in \sR^{4K+2}$, which can be formulated as $\vc^{d}_i = g(\vc^{d-1}_i,\vo^{d}_i),i\in \Omega\label{eq:refine}$, where $\vc^{d-1}$ and $\vc^{d}$ denote the prediction of the regression branch of $(d-1)$-th and $d$-th transformer decoder layer ($\vc_{0}$ denotes the proposal output by FDPN), $\vo^{d}$ is the offset prediction of the $d$-th transformer decoder layer. $\text{refinement}$ function used by deformable DETR is as follows:

\begin{align}
    \vc^{d}_i &= g_1(\vc^{d-1}_i,\vo^{d}_i) = f(f^{-1}(\vc^{d-1}_i)+\vo^{d}_i),i\in \Omega\label{eq:addrefine},
\end{align}
where $f$ and $f^{-1}$ are the activation function and its reverse function. We propose a new multiplication based $\text{refinement}$ function for the nondc component of the normalized Fourier descriptor as
\begin{align}
    \vc^{d}_i &= g_2(\vc^{d-1}_i,\vo^{d}_i) = f(f^{-1}(\vc^{d-1}_i)e^{\vo^{d}_i}),\notag\\
    &i\in \Omega/\{2K,2K+1\}\label{eq:mulrefine}.
\end{align}
The reason why we use such $\text{refinement}$ function are summarized. If we assume that $\vo^{d}$ close to $\vec{\textbf{0}}$, which is reasonable because $\vo^{d}$ is the residual term of the prediction. Based on this assumption, we can prove that for the $\text{refinement}$ function based on addition, we have
\begin{align}
    \lim_{\vo^{d}_i\rightarrow 0}\left|\frac{\partial g_1}{\partial \vo^{d}_i}
    \right| &= \left|\frac{\partial f(z)}{\partial z}|_{z=f^{-1}(\vc^{d-1}_i)}\right|, i\in \Omega,
\end{align}
and for the $\text{refinement}$ function based on multiplication, we have
\begin{align}
    \lim_{\vo^{d}_i\rightarrow 0}\left|\frac{\partial g_2}{\partial \vo^{d}_i}\right| &= \left|\frac{\partial f(z)}{\partial z}|_{z=f^{-1}(\vc^{d-1}_i)}\right|\left|f^{-1}(\vc^{d-1}_i)\right|,\notag\\
    &i\in \Omega/\{2K,2K+1\}.
\end{align}
% \resizebox{.91\linewidth}{!}{$
%     \displaystyle
%     x = \prod_{i=1}^n \sum_{j=1}^n j_i + \prod_{i=1}^n \sum_{j=1}^n i_j + \prod_{i=1}^n \sum_{j=1}^n j_i + \prod_{i=1}^n \sum_{j=1}^n i_j + \prod_{i=1}^n \sum_{j=1}^n j_i
% $}
As we can see, when calculating the gradient of $\vo^{d}_i$, our multiplication based $\text{refinement}$ function has an extra $\left|f^{-1}(\vc^{d-1}_i)\right|$ term compared with the original definition. It is easy to prove that the function $\left|f^{-1}(x)\right|$ is a monotonically increasing function of $|x|$ when $f$ is $\sigmoid$ or $\tanh$. We can deduce that $\left|f^{-1}(\vc^{d-1}_i)\right|$ term can adaptively increase the gradient of the Fourier descriptor if $|\vc^{d-1}_i|$ becomes larger. This weighting scheme is intuitive and effective, because the numerically larger Fourier descriptor generally plays a leading role in the construction of the target contour, which means they should occupy a larger gradient in back propagation than others. As a special case, we still use the $\text{refinement}$ function based on addition for the dc component $(u_0,v_0)$ of Fourier descriptor. $(u_0,v_0)$ is the center coordinate of the text contour and the optimization of the center coordinate of different targets should not be weighted by the absolute value of their position. In the ITDN, the refinement function is defined as 
\begin{align}
        g(\vc^{d}_i)=\left\{
    \begin{array}{cc}
       g_1(\vc^{d-1}_i,\vo^{d}_i)  & i=2K,2K+1, \\
       g_2(\vc^{d-1}_i,\vo^{d}_i) & i\in\Omega/\{2K,2K+1\}.\\
    \end{array}\right.
\end{align}

\subsection{Optimization strategies}
\quad \textbf{Dense Matching Strategy.} The traditional Hungarian Matching Strategy (HMS) used in \cite{DBLP:conf/cvpr/StewartAN16, DBLP:conf/eccv/CarionMSUKZ20, DBLP:conf/iclr/ZhuSLLWD21} only matches one query for each ground-truth text instance, which we find is one of the causes of slow convergence of the network. The number of text instances in an image is usually limited,and we denote it as $N_g$. $N_g$ is usually much less than the number of queries of the network $N_q$ (300 in our experiments), $\textit{i.e.}$, $N_g\ll N_q$. Using HMS means that only $N_g$ queries are used as positive samples for the training of regression branch in each iteration, which makes the network easy to overfit and damages the performance of the network due to lack of positive samples. We propose to match multiple queries for each ground-truth text instance. Dense Matching Strategy (DMS) can effectively alleviate the problem of slow convergence, especially in the early stage of training. Since the DMS makes the FANet output overlapping detection results, we have to use the NMS as post-processing. Due to the small number of queries of DETR based methods and small number of $N_m$, the number of predictions is generally small, so the NMS only brings a low cost.
The pseudo code of Dense Matching Strategy is summarized:

\begin{algorithm}[htbp]
\small
\caption{Dense Matching Strategy}
\label{assign}
\raggedright
\textbf{Input}: \\
$\hat{\mathcal{A}}$ : a set of ground-truth text contours in a given image.\\
$\mathcal{A}$ : a set of predict text contours in the image.\\
\textbf{Parameter}: \\
$\text{Cost}$ : the cost function of the proposed FANet.\\
$N_m$ : the number of predicted text contour matched for each ground truth text contour.\\
\textbf{Output}: \\
$\mathcal{P}$ : a set of positive samples.\\
$\mathcal{N}$ : a set of negtive samples.
\begin{algorithmic}[1] %[1] enables line numbers
\STATE $\mathcal{P}\leftarrow \varnothing$
\STATE $\hat{a}=|\hat{\mathcal{A}}|$
\STATE $a=|\mathcal{A}|$
\STATE $i=1$
\STATE $cost = \text{Cost}(\mathcal{A},\hat{\mathcal{A}})\in \sR^{a\times\hat{a}}$
\WHILE{$i\leq N_m$}
\STATE $k,l = \text{Hungarian\_matcher}(cost)$
\STATE $\mathcal{P}\leftarrow \mathcal{A}[k]$
\STATE $cost[k,:]=+\infty$
\STATE $i = i+1$
\ENDWHILE
\STATE $\mathcal{N}=\mathcal{A}-\mathcal{P}$
\RETURN $\mathcal{P},\mathcal{N}$
\end{algorithmic}
\end{algorithm}

To use the DMS algorithm, we need to define a \text{Cost} function that takes the set of predicted text contours $\mathcal{A}$ and the set of ground-truth text contours $\hat{\mathcal{A}}$ as input and outputs the cost between any two elements in the two sets. We use $\mathcal{C}_{cls}+\lambda\mathcal{C}_{reg}$ as the matching cost for any $(\vc,\hat{\vc})$ pair in the two sets, where $\mathcal{C}_{cls} = -log(s)$, where $s$ is the predicted text confidence score corresponding to proposal $\vc$, $\lambda$ is the hyperparameter that balance the costs. We define $\mathcal{C}_{reg}$ as follows:
\begin{align}
    \mathcal{C}_{reg}(\vc,\hat{\vc})&=\mathcal{L}_{SD}(\vc,\hat{\vc})+\alpha_1\mathcal{L}_{FD}(\vc,\hat{\vc})+\alpha_2\mathcal{L}_{bbox}(\vc,\hat{\vc})\label{eq:regloss},\\
    \mathcal{L}_{SD}(\vc,\hat{\vc})&=\text{L1}\big[\mathcal{F}^{-1}(\vc),\mathcal{F}^{-1}(\hat{\vc})\big],\\
    \mathcal{L}_{FD}(\vc,\hat{\vc})&=\text{L1}(\vc,\hat{\vc}),\\
    \mathcal{L}_{bbox}(\vc,\hat{\vc})&=\text{GIOU}\big[b(\vc),b(\hat{\vc})\big].
\end{align}
Where $\alpha_1,\alpha_2$ are the hyperparameters that balance the three matching costs. $\text{L1}$ is the L1 loss function, $b$ is a function that convert Fourier proposal to bounding box, $\text{GIOU}$ is the $\text{GIOU}$ loss in~\cite{DBLP:conf/cvpr/RezatofighiTGS019}, $\mathcal{F}^{-1}$ is the Inverse Discrete Fourier Transform. For a given image, we first calculate the $cost$ between $\mathcal{A}$ and $\hat{\mathcal{A}}$, then we iteratively perform Hungarian Matching (HM) for $N_m$ times. Each time we first add the proposals that match any ground-truth text contour to the positive sample set $\mathcal{P}$ and then remove these proposals from the matching queue by setting the matched rows of $cost$ to $+\infty$ to avoid repeated matching. Finally, we collect all the matched proposals as positive samples and the unmatched proposals as negative samples. 

\textbf{Loss function.}
The loss function of the proposed network is given by
\begin{align}
    \mathcal{L}&=\mathcal{L}_{cls,0}+\lambda\mathcal{L}_{reg,0}+\sum_{i=1}^{D}w_i(\mathcal{L}_{cls,i}+\lambda\mathcal{L}_{reg,i}),
\end{align}
where $\mathcal{L}_{cls,0}$ and $\mathcal{L}_{reg,0}$ denote the classification and regression loss of the FDPN respectively, $\mathcal{L}_{cls,i}$ and $\mathcal{L}_{reg,i}$ denote the classification and regression loss of the $i$-th decoder layer, $w_i$ is the hyperparameter that balance the losses of different transformer decoder layers, $D$ is the number of decoder layers. We use focal loss~\cite{DBLP:conf/iccv/LinGGHD17} as our default classification loss for $\mathcal{L}_{cls,i}, i=0,1,\cdots,D$. We define $\mathcal{L}_{reg,i}$ as follows:

\begin{align}
    \mathcal{L}_{reg,i}&=\frac{1}{|\mathcal{M}_i|}\sum_{(\vc, \hat{\vc}) \in \mathcal{M}_i}(\mathcal{L}_{SD}+\alpha_1\mathcal{L}_{FD}+\alpha_2\mathcal{L}_{bbox})\label{eq:regloss},
\end{align}

where $\mathcal{M}_i, i=0,1,\cdots,d$ are the set of matched Fourier descriptor pairs of $i$-th decoder layer, $i.e.$, $\forall (\vc,\hat{\vc}) \in \mathcal{M}_i, \vc \in \mathcal{P}_i$, where $\mathcal{P}_i$ is the positive sample set obtained by DMS at the $i$-th decoding layer.

\begin{table*}[htbp]
\setlength{\tabcolsep}{2.8pt}
\small
\centering
\caption{Comparison with recent state-of-the-art methods on ICDAR 2015, MSRA-TD500, CTW1500 and TotalText under the protocol of IoU@0.5, where 'Ext.' denotes extra training data. We use ResNet50 as the default backbone for our proposed FANet and all the compared algorithms.}
\label{tab:performance}
\begin{tabular}{cccc|ccc|ccc|ccc|ccc}
\hline
\multirow{2}{*}{
Methods} & 
\multirow{2}{*}{
Venue} &
\multirow{2}{*}{
Backbone} & 
\multirow{2}{*}{
Ext.} & \multicolumn{3}{c|}{
ICDAR2015} & \multicolumn{3}{c|}{
MSRA-TD500} & \multicolumn{3}{c|}{
CTW1500} & \multicolumn{3}{c}{
TotalText}\tabularnewline
%\cline{5-16}
 &  &  &  &
R(\%) & 
P(\%) & 
F(\%) & 
R(\%) & 
P(\%) & 
F(\%) &
R(\%) &
P(\%) &
F(\%) &
R(\%) &
P(\%) &
F(\%)\tabularnewline
\hline

TextSnake~\cite{DBLP:conf/eccv/LongRZHWY18} & 
ECCV'18 & 
VGG16~\cite{DBLP:journals/corr/SimonyanZ14a} &
\checkmark & 
80.4 & 
84.9 & 
82.6 & 
73.9 & 
83.2 & 
78.3 & 
\textbf{85.3} & 
67.9 & 
75.6 & 
74.5 & 
82.7 & 
78.4\tabularnewline

PAN~\cite{DBLP:conf/iccv/WangXSZWLYS19} & 
ICCV'19 &
&
\checkmark & 
81.9 & 
84.0 & 
82.9 & 
83.8 & 
84.4 & 
84.1 & 
81.2 & 
86.4 & 
83.7 & 
81.0 & 
\textbf{89.3} & 
85.0\tabularnewline

DB~\cite{DBLP:conf/aaai/LiaoWYCB20} & 
AAAI'20 &
Res50-DCN~\cite{DBLP:conf/iccv/DaiQXLZHW17} &
\checkmark & 
83.2 & 
\textbf{91.8} & 
\textbf{87.3} & 
79.2 & 
91.5 & 
84.9 & 
80.2 & 
86.9 & 
83.4 & 
82.5 & 
87.1 & 
84.7\tabularnewline

CRNet~\cite{DBLP:conf/mm/ZhouXFLZ20} & 
MM'20 &
&
\checkmark & 
84.5 & 
88.3 & 
86.4 & 
82.0 & 
86.0 & 
84.0 & 
80.9 & 
87.0 & 
83.8 & 
82.5 & 
85.8 & 
84.1\tabularnewline

\cite{DBLP:conf/cvpr/RaisiN0WZ21} & 
CVPRW'21 &
&
\checkmark & 
78.3 & 
89.8 & 
83.7 & 
83.8 & 
90.9 & 
87.2 & 
- & 
- & 
- & 
- & 
- & 
-\tabularnewline

DRRG~\cite{DBLP:conf/cvpr/ZhangZHLYWY20} & 
CVPR'20 &
&
\checkmark & 
84.7 & 
88.5 & 
86.6 & 
82.3 & 
88.1 & 
85.1 & 
83.0 & 
85.9 & 
84.5 & 
\textbf{84.9} & 
86.5 & 
85.7\tabularnewline

PCR~\cite{DBLP:conf/cvpr/DaiZ0C21} & 
CVPR'21 &
DLA34~\cite{DBLP:conf/cvpr/DaiZ0C21} &
\checkmark & 
- & 
- & 
- & 
83.5 & 
90.8 & 
87.0 & 
82.3 & 
87.2 & 
84.7 & 
82.0 & 
88.5 & 
85.2\tabularnewline

PSENet~\cite{DBLP:conf/cvpr/WangXLHLY019} & 
CVPR'19 &
 &
 & 
79.7 & 
81.5 & 
80.6 & 
- & 
- & 
- & 
75.6 & 
80.6 & 
78.0 & 
75.1 & 
81.8 & 
78.3\tabularnewline

PAN~\cite{DBLP:conf/iccv/WangXSZWLYS19} & 
ICCV'19 &
&
 & 
77.8 & 
82.9 & 
80.3 & 
77.3 & 
80.7 & 
78.9 & 
77.7 & 
84.6 & 
81.0 & 
79.4 & 
88.0 & 
83.5\tabularnewline

TextRay~\cite{DBLP:conf/mm/WangCW020} & 
MM'20 &
&
 & 
- & 
- & 
- & 
- & 
- & 
- & 
80.4 & 
82.8 & 
81.6 & 
77.9 & 
83.5 & 
80.6\tabularnewline

PCR~\cite{DBLP:conf/cvpr/DaiZ0C21} & 
CVPR'21 &
&
 & 
- & 
- & 
- & 
77.8 & 
87.6 & 
82.4 & 
79.8 & 
85.3 & 
82.4 & 
80.2 & 
86.1 & 
83.1\tabularnewline

FCENet~\cite{DBLP:conf/cvpr/ZhuCLKJZ21} & 
CVPR'21 &
&
 & 
84.2 & 
85.1 & 
84.6 & 
- & 
- & 
- & 
80.7 & 
85.7 & 
83.1 & 
79.8 & 
87.4 & 
83.4\tabularnewline
\hline

FANet & 
 & & &
83.8 & 
85.6 & 
84.7 & 
83.3 & 
91.7 & 
87.3 & 
84.3 & 
85.6 & 
84.9 & 
83.3 & 
86.2 & 
84.8\tabularnewline
FANet & & &
\checkmark & 
\textbf{87.6} & 
85.0 & 
86.3 & 
\textbf{84.2} & 
\textbf{92.1} & 
\textbf{88.0} & 
84.0 & 
\textbf{87.8} & 
\textbf{85.9} & 
84.7 & 
87.1 & 
\textbf{85.9}\tabularnewline
\hline
\end{tabular}
\end{table*}

\begin{table*}
\small
\centering
\caption{Convergence performance comparison with recent state-of-the-art methods. We use F-measure as the evaluation protocol, and $\{30,100,500\}$e means training $\{30,100,500\}$ epochs on the dataset. The results of other algorithms are reproduced in open-mmocr~\cite{DBLP:conf/mm/KuangS0YLCWZGZC21}. $\dagger$ denote using original Hungarian Matching Strategy (HMS).}
\label{tab:convergence}
\begin{tabular}{c|ccc|ccc|ccc|ccc}
\hline
\multirow{2}{*}{
Methods} & \multicolumn{3}{c|}{
ICDAR2015} & \multicolumn{3}{c|}{
MSRA-TD500} & \multicolumn{3}{c|}{
CTW1500} & \multicolumn{3}{c}{
TotalText}\tabularnewline
%\cline{2-13}
 & 
30e & 
100e & 
500e & 
30e & 
100e & 
500e & 
30e & 
100e & 
500e & 
30e & 
100e & 
500e\tabularnewline
\hline
PAN~\cite{DBLP:conf/iccv/WangXSZWLYS19} & 
62.3 & 
73.8 & 
74.9 & 
61.4 & 
65.1 & 
70.0 & 
72.0 & 
73.4 & 
74.6 & 
66.9 & 
73.7 & 
76.0\tabularnewline
FCENet~\cite{DBLP:conf/cvpr/ZhuCLKJZ21} & 
65.6 & 
78.5 & 
83.5 & 
47.2 & 
62.2 & 
74.2 & 
65.5 & 
75.6 & 
81.0 & 
71.3 & 
80.2 & 
83.0\tabularnewline
\hline
FANet$\dagger$ & 
73.9 & 
82.6 & 
83.5 & 
62.1 & 
70.6 & 
83.5 & 
75.6 & 
81.2 & 
84.6 & 
79.0 & 
81.2 & 
84.6\tabularnewline
FANet & 
\textbf{77.2} & 
\textbf{84.1} & 
\textbf{84.7} & 
\textbf{70.7} & 
\textbf{84.8} & 
\textbf{87.3} & 
\textbf{79.1} & 
\textbf{83.3} & 
\textbf{84.9} & 
\textbf{81.9} & 
\textbf{84.1} & 
\textbf{84.8}\tabularnewline
\hline
\end{tabular}
\end{table*}

\section{Experiment}
% We conducted quantitative experiments on four public benchmarks: ICDAR2015~\cite{DBLP:conf/icdar/KaratzasGNGBIMN15}, MSRA-TD500~\cite{DBLP:conf/cvpr/YaoBLMT12}, CTW1500~\cite{DBLP:journals/pr/LiuJZLZ19} and TotalText~\cite{DBLP:conf/icdar/ChngC17} to verify the effectiveness of the proposed algorithm and the effectiveness of different components.
\subsection{Implementation details}
The backbone of FANet is ResNet-50~\cite{DBLP:conf/cvpr/HeZRS16} which is pre-trained on ImageNet~\cite{DBLP:conf/cvpr/DengDSLL009}. Following FCENet~\cite{DBLP:conf/cvpr/ZhuCLKJZ21}, during the target generation stage, we sample equidistantly a fixed number $N$ ($N$ = 400 in our experiments) points on the text contour, obtaining the resampled point sequence, i.e., $\{\mathcal{Z}(\frac{n}{N})\}$, $n\in\{0,...,N-1\}$. Then, we transform the resampled point sequence into its corresponding normalized Fourier descriptor with Equation~\ref{eq:fouier}. We set the highest level reserved for the Fourier descriptor $K$ to 5. The number of queries of the transformer decoder $N_q$ to 300. The number of transformer layers $D$ to 6. For the hyperparameters that balance the losses, we set $[\lambda,\alpha_1,\alpha_2]$ equals to $[0.25,5,0.4]$ and set $w_d$ to $1$, $d=1,2,\cdots,D$. 

\textbf{Optimization Setting.} The experiments are conducted on the workstation with 8 Tesla V100 GPU. The data augmentation contains color jitter, random rotation, random horizontal flip, random scale, random crop, and random resize. Following the common practices, we ignore the blurred text regions labeled as “DO NOT CARE” during training. The network is trained using an Adam optimizer with the weight decay ratio equals to 0.0001 and we fix the batch size to 16. We use $\textbf{poly}$ as our default learning rate policy, and we set its power as 0.9. For the experiments without pre-training, we set the match number of the DMS $N_m$ to 3 by default and MSRA-TD500 to 10 and train the proposed FANet for 500 epochs separately on the dataset we report the results. For the experiment with pre-training, we first pre-trained our model for 25 epochs on COCOTextv2~\cite{veit2016cocotext}, and then finetune our model for 250 epochs on the benchmark datasets respectively. we use HMS as default and DMS $N_m$ to 5 for MSRA-TD500 by default when FANet is pretrained use COCOTextv2. For the convergence experiments in Table~\ref{tab:convergence}, as the learning rate attenuation strategy will damage the performance of the algorithm under the restriction of very little iterations, we keep the initial learning rate unchanged for the experiments of training 30 epochs. For the experiments of training more than 30 epochs, we keep the optimization settings of the re-implemented algorithms consistent with the original paper.

\textbf{Inference setting.} In the inference stage, we resize the long dimension of test images to 1080, 1280, 2200 and 1312 for MSRA-TD500, CTW1500, ICDAR2015 and TotalText respectively.

\subsection{Comparison with the state-of-the-art methods}
We evaluate the proposed FANet on four public benchmark datasets, $i.e.$,  ICDAR2015~\cite{DBLP:conf/icdar/KaratzasGNGBIMN15}, MSRA-TD500~\cite{DBLP:conf/cvpr/YaoBLMT12}, CTW1500~\cite{DBLP:journals/pr/LiuJZLZ19} and TotalText~\cite{DBLP:conf/icdar/ChngC17}, and the experiments results are summarized in Table \ref{tab:performance}.

\textbf{ICDAR2015.} The proposed FANet achieves comparable performance with SOTA methods on ICDAR2015. It's worth noting that the proposed FANet outperforms the previous DETR based text detection algorithm~\cite{DBLP:conf/cvpr/RaisiN0WZ21} by 2.6\% (86.3\% $vs.$ 83.7\%) based on F-measure with pre-training.

\textbf{MSRA-TD500.} The proposed FANet achieves the SOTA performance on MSRA-TD500. Specifically, if compared with the methods without pre-training, the proposed FANet outperforms the previous SOTA method PCR by 4.9\% (87.3\% $vs.$ 82.4\%) in F-measure. If compared with the algorithm with pre-training, the proposed FANet can still outperforms the previous SOTA method~\cite{DBLP:conf/cvpr/RaisiN0WZ21} by 0.8\% (88.0\% $vs.$ 87.2\%).

\textbf{CTW1500.} The proposed FANet achieves the SOTA performance on CTW1500. if compared with the methods without pre-training, the proposed FANet outperforms the previous SOTA method FCENet by 1.8\% (84.9\% $vs.$ 83.1\%) in F-measure. If compared with the algorithm with pre-training, the proposed FANet can still outperforms the previous SOTA method~\cite{DBLP:conf/cvpr/DaiZ0C21} by 1.2\% (85.9\% $vs.$ 84.7\%).

\textbf{TotalText.} The proposed FANet achieves the SOTA performance on TotalText. If compared with the methods without pre-training, the proposed FANet outperforms the previous SOTA method PAN by 1.3\% (84.8\% $vs.$ 83.5\%) in F-measure. If compared with the algorithm with pre-training, the proposed FANet can still outperforms the previous SOTA method~\cite{DBLP:conf/cvpr/ZhangZHLYWY20} by 0.2\% (85.9\% $vs.$ 85.7\%).

\textbf{Convergence Performance.} Under the constraint of limited epochs, the proposed FANet can still achieve good results. In particular, with only 30 epochs of training, FANet surpass FCENet by 11.6\%, 23.5\%, 13.6\%, 10.6\%, surpass PAN by 14.9\%, 9.3\%, 7.1\%, 15.0\% based on F-measure on ICDAR2015, MSRA-TD500, CTW1500 and TotalText respectively, which shows that the proposed FANet can achieve much better performance than the previous SOTA methods with fewer epochs.
With only 100 epochs of training, FANet can achieve the detection performance of 84.8\% on MSRA-TD500, 83.3\% on CTW1500 and 84.1\% on TotalText, which already have achieved the SOTA performance if only compared with the methods without pre-training.
In other words, we only need to train FANet on a server with 8 Tesla V100 for about 20 minutes, and we can get a SOTA text detector that achieve 84.8\% F-measure on MSRA-TD500. 

\textbf{Qualitative Results.} Some detection results of the proposed FANet are shown in Figure~\ref{fig:results}. For the datasets whose text instances are all rectangular boxes, we take the minimum enclosing rectangle for the text contour. As we can see, FANet is capable of detecting text in a variety of scenarios, including extremely long text in MSRA-TD500, scene text in ICDAR2015, curved text in CTW1500 and TotalText. Qualitative and quantitative results show that the proposed FANet has a wide range of application scenarios.

\begin{figure*}[t]
\label{fig:results}
 \centering  %width=0.7\linewidth
 \includegraphics[width=6.8in]{./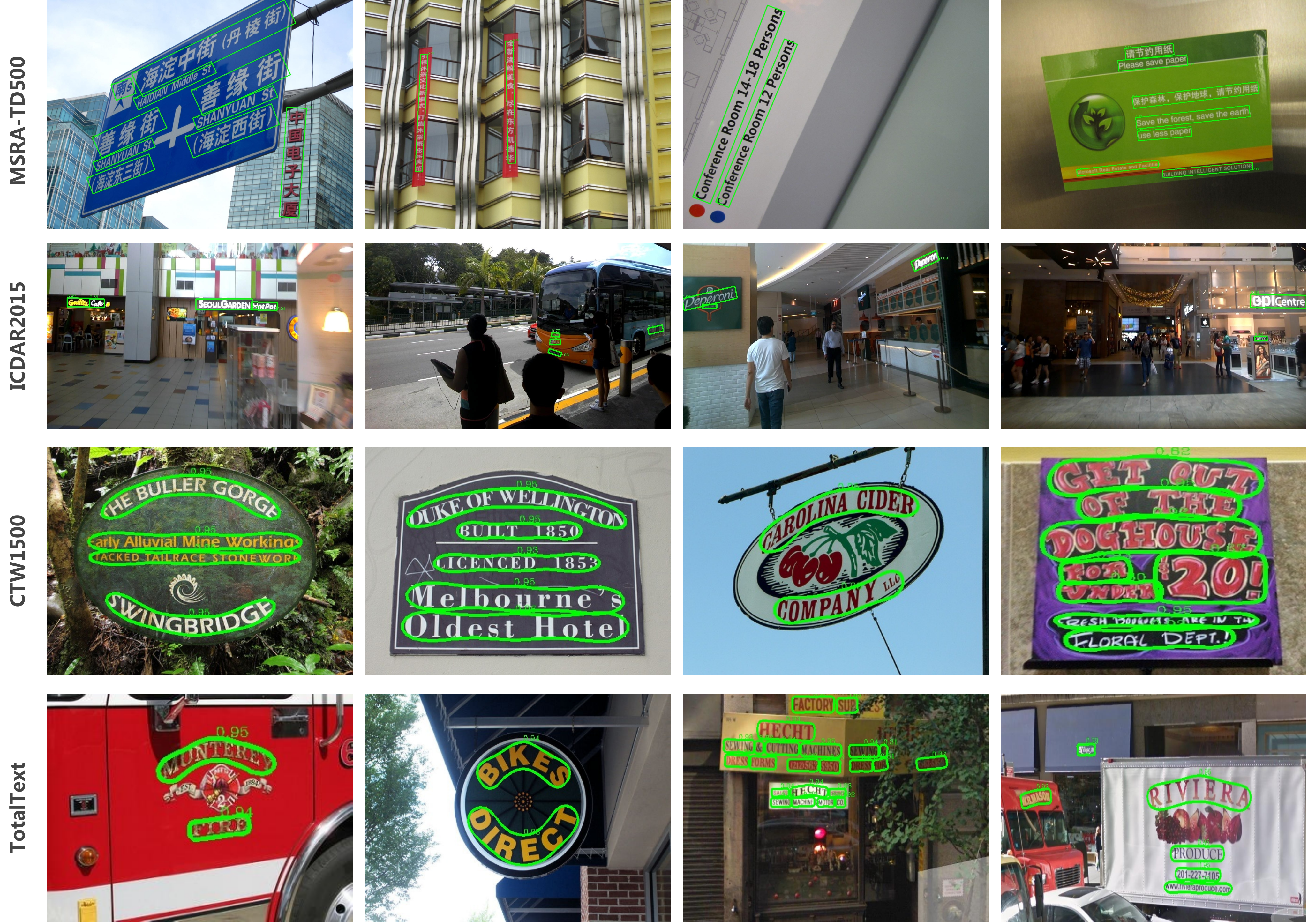}
 \caption{Some detection results of our proposed method on MSRA-TD500, ICDAR2015, CTW1500 and TotalText.}
\label{fig:results}
%\vspace{-2mm}
\end{figure*}

\subsection{Ablation study}
We conduct ablation studies on CTW1500 and MSRA-TD500 under the protocol of IoU@0.5. We fix the number of training epochs for all experiments to 500 epochs.

% ablation of core components
\begin{table}
\setlength{\tabcolsep}{2pt}
\small
\centering
\caption{Ablation experiments on our proposed core components.}
\label{tab:core}
\begin{tabular}{ccc|ccc|ccc}
\hline
\multirow{2}{*}{
ITDN} & \multirow{2}{*}{
FDPN} & \multirow{2}{*}{
DMS} & \multicolumn{3}{c|}{
CTW1500} & \multicolumn{3}{c}{
MSRA-TD500}\tabularnewline
%\cline{4-9}
 &  &  & 
R(\%) & 
P(\%) & 
F(\%) & 
R(\%) & 
P(\%) & 
F(\%)\tabularnewline
\hline

 & 
 & 
 & 
78.9 & 
82.0 & 
80.4 & 
19.1 & 
9.2 & 
12.4\tabularnewline

\checkmark & 
 & 
 & 
79.2 & 
\textbf{87.4} & 
83.1 & 
58.8 & 
67.7 & 
62.9\tabularnewline

\checkmark & 
\checkmark & 
 & 
82.5 & 
86.9 & 
84.6 & 
79.2 & 
88.3 & 
83.5\tabularnewline

\checkmark & 
\checkmark & 
\checkmark & 
\textbf{84.3} & 
85.6 & 
\textbf{84.9} & 
\textbf{83.3} & 
\textbf{91.7} & 
\textbf{87.3} \tabularnewline
\hline
\end{tabular}
\end{table}

\textbf{Baseline and core components.}
Based on deformable DETR, we directly use normalized Fourier descriptor to replace bounding box as the regression target to obtain our baseline algorithm. As shown in Table~\ref{tab:core}, the baseline of the proposed method reaches 80.4\% on CTW1500 and 12.4\% on MSRA-TD500. Compared with our baseline, ITDN can bring relative improvements of 2.7\% (83.1\% $vs.$ 80.4\%) and 50.5\% (62.9\% $vs.$ 12.4\%) on 
CTW1500 and MSRA-TD500 respectively. Then, the addition of FDPN can bring relative improvements of 1.5\% (84.6\% $vs.$ 83.1\%) and 20.6\% (83.5\% $vs.$ 62.9\%) on CTW1500 and MSRA-TD500 respectively. Finally, the DMS can bring relative improvements of 0.3\% (84.9\% $vs.$ 84.6\%) and 3.8\% (87.3\% $vs.$ 83.5\%) on CTW1500 and MSRA-TD500 respectively. The above modules and strategies can significantly improve the performance of the network, especially for MSRA-TD500 which only has small number of training data.

\begin{figure}[htbp]

 \centering  %width=0.7\linewidth
 \includegraphics[width=3.1in]{./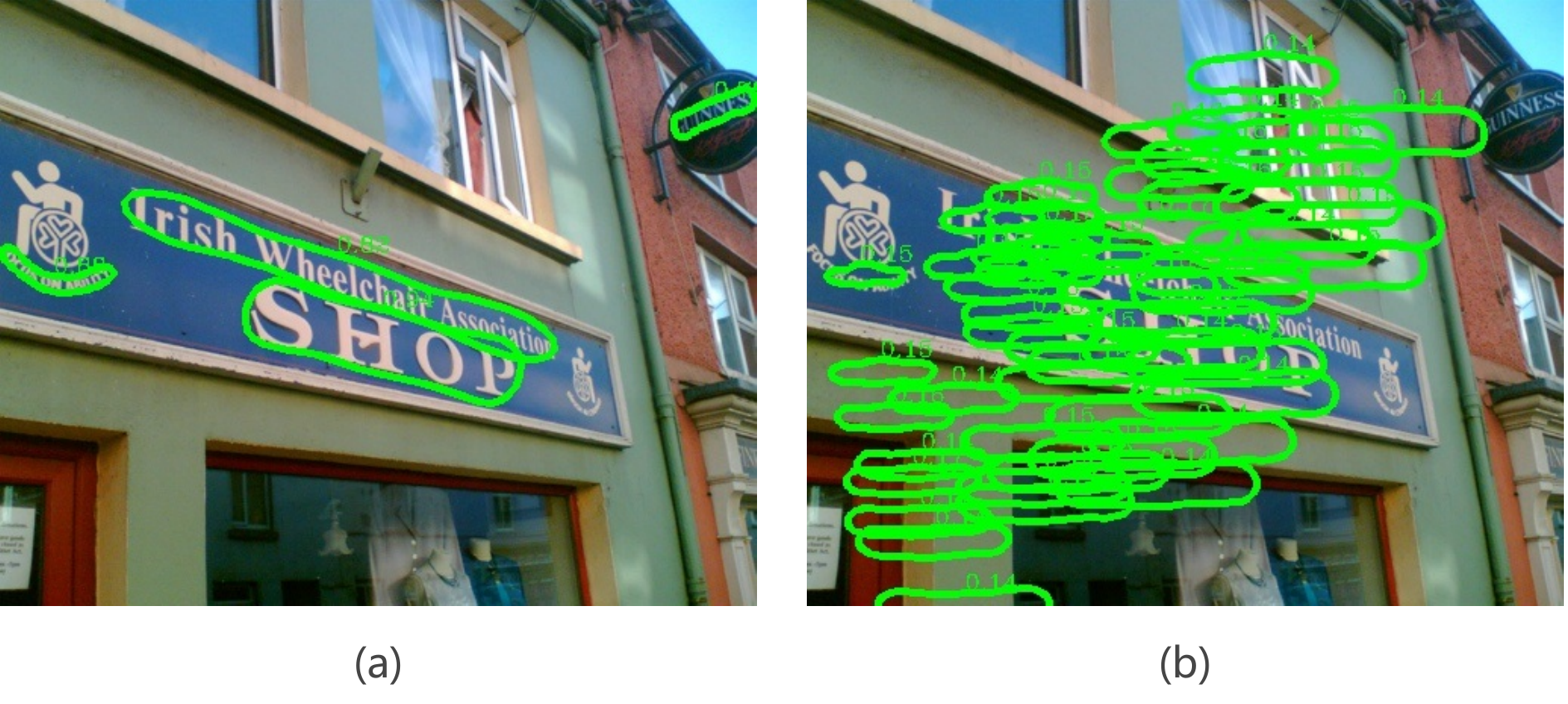}
 \caption{Comparison between normalized Fourier Descriptor regression target and original Fourier Descriptor regression target when DETR like detection architecture is used. (a) denotes using normalized Fourier Descriptor as regression target and our activation function, (b) denotes using original Fourier Descriptor as regression target and identity activation function like FCENet.}
 \label{fig:norm}
%\vspace{-2mm}
\end{figure}

% ablation of normalization
\begin{table}
\setlength{\tabcolsep}{4.0pt}
\small
\centering
\caption{Comparison between our activation function and other commonly used activation functions. AF denotes Activation Function and NT denotes whether to use the Normalized Fourier Descriptor as the regression target. "---" denotes that the network becomes untrainable and no result can be obtained.}
\label{tab:normalization}
\begin{tabular}{cc|ccc|ccc}
\hline
\multirow{2}{*}{
AF} & \multirow{2}{*}{NT} & \multicolumn{3}{c|}{
CTW1500} & \multicolumn{3}{c}{
MSRA-TD500}\tabularnewline
%\cline{3-8}
 & &
R(\%) & 
P(\%) & 
F(\%) & 
R(\%) & 
P(\%) & 
F(\%)\tabularnewline
\hline

identity & 
 &
0.0 & 
0.0 & 
0.0 & 
0.0 & 
0.0 & 
0.0\tabularnewline

identity & 
\checkmark &
--- & 
--- & 
--- & 
--- & 
--- & 
---\tabularnewline

$\sigmoid$ & 
\checkmark &
0.0 & 
0.0 & 
0.0 & 
1.6 & 
1.9 & 
1.7\tabularnewline

$\tanh$ & 
\checkmark &
\textbf{82.9} & 
85.7 & 
84.3 & 
75.8 & 
86.6 & 
80.8\tabularnewline
% \hline 
% 
% ours delta3.14 & 
% 82.21 & 
% 87.38 & 
% 84.72 & 
% 79.55 & 
% 88.70 & 
% 83.88\tabularnewline

ours & 
\checkmark &
82.5 & 
\textbf{86.9} & 
\textbf{84.6} & 
\textbf{79.2} & 
\textbf{88.3} & 
\textbf{83.5} \tabularnewline
\hline
\end{tabular}
\end{table}

\textbf{Activation function.} As shown in Figure~\ref{fig:norm}, we first compare the performance between the normalized Fourier descriptor and the original Fourier descriptor when DETR like architecture is used. The network does not locate text well when the original Fourier descriptor is used which means that the original Fourier descriptor do not fit with DETR like detection architecture.
As shown in line 2 of Table~\ref{tab:normalization}, the network will not be trainable if the activation function in the proposed FANet is replaced by identity function. 
Then we replace the activation function in the proposed FANet with $\sigmoid$ in line 3 of Table~\ref{tab:normalization}, which makes the detection performance decrease to a very low level, since the range of $\sigmoid$ can not completely cover the normalized Fourier descriptor to be predicted. 
Finally, we test the performance when replacing the activation function by $\tanh$ in line 4 of Table~\ref{tab:normalization}, which makes the corresponding F-measure of the network decrease by 0.3\% (84.6\% $vs.$ 84.3\%) and 2.7\% (83.5\% $vs.$ 80.8\%) on CTW1500 and MSRA-TD500 respectively.
The experimental results show that it is more appropriate to use the activation function which can well match the range of regression target.

\begin{table*}[htbp]
\setlength{\tabcolsep}{5.pt}
\small
\centering
\caption{The impact of the match number of the Dense matching Strategy for the performance of the network under the restriction of training 500 epochs. }
\label{tab:match}
\begin{tabular}{c|ccc|ccc|ccc|ccc}
\hline
\multirow{2}{*}{
Match number} & \multicolumn{3}{c|}{
MSRA-TD500} & \multicolumn{3}{c|}{
ICDAR2015} & \multicolumn{3}{c|}{
CTW1500} & \multicolumn{3}{c}{
TotalText}\tabularnewline
%\cline{2-13} 
 & 
R(\%) & 
P(\%) & 
F(\%) & 
R(\%) & 
P(\%) & 
F(\%) & 
R(\%) & 
P(\%) & 
F(\%) & 
R(\%) & 
P(\%) & 
F(\%)\tabularnewline
\hline

1 & 
79.2 & 
88.3 & 
83.5 & 
79.9 & 
87.4 & 
83.5 & 
82.5 & 
86.9 & 
84.6 & 
82.4 & 
87.0 & 
84.6\tabularnewline

% 2 & 
% - & 
% - & 
% - & 
% - & 
% - & 
% - & 
% - & 
% - & 
% - & 
% - & 
% - & 
% -\tabularnewline

3 & 
- & 
- & 
- & 
\textbf{83.8} & 
\textbf{85.6} & 
\textbf{84.7} & 
\textbf{84.3} & 
85.6 & 
\textbf{84.9} & 
\textbf{83.3} & 
86.2 & 
\textbf{84.8}\tabularnewline

5 & 
\textbf{83.2} & 
88.8 & 
85.9 & 
- & 
- & 
- & 
- & 
- & 
- & 
81.5 & 
87.2 & 
84.3\tabularnewline

10 & 
\textbf{83.3} & 
\textbf{91.7} & 
\textbf{87.3} & 
78.1 & 
85.6 & 
81.7 & 
80.3 & 
\textbf{88.7} & 
84.3 & 
73.1 & 
84.9 & 
78.6\tabularnewline

15 & 
82.3 & 
90.2 & 
86.1 & 
- & 
- & 
- & 
- & 
- & 
- & 
- & 
- & 
-\tabularnewline
\hline
\end{tabular}
\end{table*}

\begin{table}[htbp]
\setlength{\tabcolsep}{4.0pt}
\footnotesize
\centering
\caption{Ablation of our proposed Iterative Text Decoding Network. "$\text{add}$" and "$\text{mul}$" indicate the cases where our refinement function is replaced by add and multiplication based refinement function respectively. "$\text{w/o reference}$" indicates that Fourier descriptor is not used as the reference location for the Multi-Scale Deformable Attention Module. "$\text{w/o refinement}$" indicates the case where the refinement module is removed.  }
\label{tab:refine}
\begin{tabular}{c|ccc|ccc}
\hline
\multirow{2}{*}{
Refinement module} & \multicolumn{3}{c|}{
CTW1500} & \multicolumn{3}{c}{
MSRA-TD500}\tabularnewline
%\cline{2-7}
 & 
R(\%) & 
P(\%) & 
F(\%) & 
R(\%) & 
P(\%) & 
F(\%)\tabularnewline
\hline

w/o refinement & 
78.9 & 
82.0 & 
80.4 & 
19.1 & 
9.2 & 
12.4\tabularnewline

w/o reference & 
76.4 & 
79.9 & 
78.1 & 
40.2 & 
43.3 & 
41.7\tabularnewline

add & 
77.7 & 
85.0 & 
81.2 & 
36.1 & 
47.3 & 
40.9\tabularnewline

mul & 
78.8 & 
85.4 & 
82.0 & 
50.3 & 
65.7 & 
57.0\tabularnewline
ours & 
\textbf{79.2} & 
\textbf{87.4} & 
\textbf{83.1} & 
\textbf{58.8} & 
\textbf{67.7} & 
\textbf{62.9} \tabularnewline
\hline
\end{tabular}
\end{table}

% ablation of loss function
\begin{table}[htbp]
\setlength{\tabcolsep}{3.1pt}
\small
\centering
\caption{Ablation of the loss functions. }
\label{tab:loss}
\begin{tabular}{ccc|ccc|ccc}
\hline
\multirow{2}{*}{
$\mathcal{L}_{SD}$} & \multirow{2}{*}{
$\mathcal{L}_{FD}$} & \multirow{2}{*}{
$\mathcal{L}_{bbox}$} & \multicolumn{3}{c|}{
CTW1500} & \multicolumn{3}{c}{
MSRA-TD500}\tabularnewline
%\cline{4-9}
 &  &  & 
R(\%) & 
P(\%) & 
F(\%) & 
R(\%) & 
P(\%) & 
F(\%)\tabularnewline
\hline

 & 
\checkmark & 
\checkmark & 
75.3 & 
84.4 & 
79.6 & 
56.7 & 
59.9 & 
58.3\tabularnewline

\checkmark & 
\checkmark & 
 & 
\textbf{82.6} & 
86.5 & 
84.5 & 
\textbf{82.0} & 
83.5 & 
82.7\tabularnewline

\checkmark & 
 & 
\checkmark & 
81.8 & 
86.2 & 
84.0 & 
81.6 & 
84.2 & 
82.9\tabularnewline

\checkmark & 
\checkmark & 
\checkmark & 
82.5 & 
\textbf{86.9} & 
\textbf{84.6} & 
79.2 & 
\textbf{88.3} & 
\textbf{83.5}\tabularnewline
\hline
\end{tabular}
\end{table}

\textbf{Iterative Text Decoding Network.}
As shown in Table~\ref{tab:refine}, replacing the refinement function by the refinement function based on addition or multiplication can result in the decrease of F-measure for the proposed FANet by 1.9\% and 1.1\% on CTW1500, and by 22.0\% and 5.9\% on MSRA-TD500 respectively, this shows that the proposed refinement module is more suitable for the normalized Fourier descriptor. Not taking the Fourier descriptor as the reference location for the Multi-Scale Deformable Attention module reduce the F-measure performance of the proposed FANet by 5.0\% and 21.2\% on CTW1500 and MSRA-TD500 respectively, which shows that aggregating features near the text region is conducive to the accurate regression of the network to the text instances.

% We compare our $\text{Refine}$ function with the original add based $\text{Refine}$ function. As shown in Table~\ref{tab:refine}, our refine method surpass original way over 0.8\% (82.0\% $vs.$ 81.2\%) and over 16.1\% (57.0\% $vs.$ 40.9\%) in F-measure on CTW1500 and MSRA-TD500 respectively, which indicates that it is more appropriate to use multiplication based refine instead of the addition based refine for the high frequencies.

\textbf{Loss function.}
As shown in Table~\ref{tab:loss}, removing $\mathcal{L}_{SD}$, $\mathcal{L}_{FD}$ or $\mathcal{L}_{bbox}$ in the matching cost calculation process and loss calculation process reduce the F-measure performance of the proposed FANet by 5.0\%, 0.6\% and 0.1\% on CTW1500 respectively, and by 25.2\%, 0.6\% and 0.8\% on MSRA-TD500 respectively. From this result, we can know that $\text{L1}$ loss for the resampled points of the text contour is more important than other two losses. Nevertheless, mixing the three losses submit the best performance, which indicates that the gains brought by the three losses are not completely overlapping. In order to achieve the 
best performance, we take the mixed loss of the three losses as the loss function of the proposed FANet.

\textbf{Dense matching strategy.}
As shown in Table~\ref{tab:convergence}, with only 30 epochs of training, the Dense Matching Strategy (DMS) can bring improvements of 3.3\%, 8.6\%, 3.5\% and 2.9\% based on F-measure on ICDAR2015, MSRA-TD500, CTW1500 and TotalText respectively. We attribute this to the fact that DMS can alleviate the overfitting problem caused by lack of positive samples. As the number of epochs increases, the gain brought by DMS becomes smaller, especially on CTW1500 and TotalText. We attribute this to the fact that as the number of training epochs increases, data enhancement can alleviate the performance degradation caused by overfitting 
and narrows the performance gap between HMS and DMS. So when we are faced with scenarios that enough training epochs are provided such as pre-training, using the original HMS or use DMS with small $H_m$ instead is a better choice. Match number $N_m$ in DMS is an important hyperparameter, 
compared with the case when $N_m=1$, setting $N_m$ to 10 will bring relative gain of $3.8\%$ $(87.3\% vs. 83.5\%)$ based on F-measure on MSRA-TD500, which shows the effectiveness of the DMS. Compared with the case when $N_m=10$, setting $N_m$ to 5 or 15 will reduce the performance of our network by $1.4\%$ $(87.3\% vs. 85.9\%)$ and $1.2\%$ $(87.3\% vs. 86.1\%)$ in F-measure respectively, which indicates that too large or too small $N_m$ will damage the performance of the network.
Compared with the case when $N_m=1$, setting $N_m$ to 3 will bring relative gain of $1.2\%$ $(84.7\% vs. 83.5\%)$, $0.3\%$ $(84.9\% vs. 84.6\%)$ and $0.2\%$ $(84.8\% vs. 84.6\%)$ in F-measure on ICDAR2015, CTW1500 and TotalText respectively. Same as MSRA-TD500, too large or too small $N_m$ will damage the performance of the network. As we can see in Table~\ref{tab:match}, the optimal choice of match number is 10 on MSRA-TD500 and 3 on ICDAR2015, CTW1500 and TotalText.
We can deduce that for a dataset with only a small number of text instances in each image like MSRA-TD500, the problem of overfitting is exacerbated by the lack of positive samples, so a larger $N_m$ can bring the network considerable performance gain, but for datasets with dense text instances in each image like ICDAR2015, CTW1500 and TotalText, a smaller $N_m$ is more appropriate.

\section{Conclusion}
This paper focus on the fast localization learning and accurate detection for scene text detection. We propose FANet, a  Fast convergence and Accurate scene text detection Network, which can achieve the SOTA performance on ICDAR2015, MSRA-TD500, CTW1500 and TotalText. More importantly, FANet can accurately detect scene text of arbitrary shapes with fewer training epochs.

\appendix

\section{Fourier Descriptor Normalization}
\begin{theorem}
If $\mathcal{Z}(t) = \mathcal{X}(t)+i\mathcal{Y}(t)$ is a univariate continuous periodic function about independent variable $t$, satisfying $t\in [0,1]$, the period is 1, $(\mathcal{X}(t),\mathcal{Y}(t))\in [0,1]^2$. Then we have the following conclusions:
\begin{align}
    c_k &= \int_0^1 \mathcal{Z}(t)e^{-2 \pi i k t}dt, k\in \sZ\\
    c_0 &= u_0+iv_0, u_0,v_0\in [0,1]\\
    c_l &= u_l+iv_l, u_l,v_l\in [-\frac{2}{\pi},\frac{2}{\pi}],l\in \sZ,l\neq 0
\end{align}
\end{theorem}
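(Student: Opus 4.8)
The plan is to treat the two conclusions separately, since the $k=0$ case reduces to a trivial averaging argument while the $l\neq 0$ case requires a sharper, tight estimate. First I would handle $c_0$: setting $k=0$ in the definition gives $c_0=\int_0^1 \mathcal{Z}(t)\,dt=\int_0^1 \mathcal{X}(t)\,dt+i\int_0^1 \mathcal{Y}(t)\,dt$, so $u_0$ and $v_0$ are simply the averages of $\mathcal{X}$ and $\mathcal{Y}$ over $[0,1]$. Because $\mathcal{X}(t),\mathcal{Y}(t)\in[0,1]$ pointwise, monotonicity of the integral immediately yields $u_0,v_0\in[0,1]$.

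For the case $l\neq 0$, the first step is to expand the integrand via Euler's formula, writing $e^{-2\pi i l t}=\cos(2\pi l t)-i\sin(2\pi l t)$ and collecting real and imaginary parts. This gives
\begin{align}
u_l &= \int_0^1 \big[\mathcal{X}(t)\cos(2\pi l t)+\mathcal{Y}(t)\sin(2\pi l t)\big]\,dt,\\
v_l &= \int_0^1 \big[\mathcal{Y}(t)\cos(2\pi l t)-\mathcal{X}(t)\sin(2\pi l t)\big]\,dt.
\end{align}
The key observation is that each of $u_l$ and $v_l$ is a sum of two integrals of the form $\int_0^1 g(t)\,h(t)\,dt$, with $g\in\{\mathcal{X},\mathcal{Y}\}$ taking values in $[0,1]$ and $h\in\{\cos(2\pi l t),\sin(2\pi l t)\}$. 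Since $0\le g\le 1$, I would bound such an integral above by $\int_0^1 \max(h,0)\,dt$ and below by $\int_0^1 \min(h,0)\,dt$, because $g h\le g\max(h,0)\le\max(h,0)$ and symmetrically for the lower side.

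The heart of the argument is then a single elementary computation: $\int_0^1 \max(\cos(2\pi l t),0)\,dt=\tfrac1\pi$ for every nonzero integer $l$, and likewise for $\sin$ and for the negative parts. I would prove this by the substitution $\theta=2\pi l t$, which turns the integral into $\tfrac{1}{2\pi|l|}\int_0^{2\pi|l|}\max(\cos\theta,0)\,d\theta$; using that the integrand is $2\pi$-periodic with $\int_0^{2\pi}\max(\cos\theta,0)\,d\theta=2$, this collapses to $\tfrac1\pi$ independently of $l$. Combining this with the pointwise bounds above shows each of the four constituent integrals lies in $[-\tfrac1\pi,\tfrac1\pi]$, and summing the two contributions to $u_l$ (resp. $v_l$) gives $u_l,v_l\in[-\tfrac2\pi,\tfrac2\pi]$.

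I expect the main obstacle to be establishing the tight constant $\tfrac2\pi$ rather than a looser bound. A naive estimate $\big|\int_0^1 g h\,dt\big|\le \int_0^1|h|\,dt=\tfrac2\pi$ per term would already give $|u_l|\le\tfrac4\pi$, which is too weak; the improvement to $\tfrac1\pi$ per term relies on exploiting the one-sided constraint $g\ge 0$ so that only the positive (or negative) part of the oscillating factor contributes. A secondary point to verify is that the two terms defining $u_l$ can approach their extrema simultaneously, which is legitimate precisely because $\mathcal{X}$ and $\mathcal{Y}$ range independently over $[0,1]$, so the two bounds add without interaction and the constant $\tfrac2\pi$ is sharp.
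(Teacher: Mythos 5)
Your proposal is correct and follows essentially the same route as the paper's proof: the $k=0$ case by monotonicity of the integral, and the $l\neq 0$ case by splitting into real and imaginary parts, using $0\le\mathcal{X},\mathcal{Y}\le 1$ to bound each oscillatory integral between the integrals of the negative and positive parts of $\cos(2\pi l t)$ or $\sin(2\pi l t)$ (your $\max(h,0)$ is the paper's $h\cdot I[h>0]$), and evaluating each of these to $\pm\frac{1}{\pi}$ before summing to $\frac{2}{\pi}$. The only cosmetic difference is that you compute $\int_0^1\max(\cos(2\pi l t),0)\,dt=\frac{1}{\pi}$ by substitution and periodicity, whereas the paper counts the positive half-arches directly; both are equivalent.
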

\begin{proof}
\begin{align}
    c_k &= \int_0^1 \mathcal{Z}(t)e^{-2 \pi i k t}dt\notag\\
    &= \int_0^1[\mathcal{X}(t)+i\mathcal{Y}(t)][cos(-2\pi k t)+isin(-2\pi k t)]dt\notag\\
    &= \int_0^1[\mathcal{X}(t)cos(2\pi k t)+\mathcal{Y}(t)sin(2\pi k t)]dt\notag\\
    & +i\int_0^1[-\mathcal{X}(t)sin(2\pi k t)+\mathcal{Y}(t)cos(2\pi k t)]dt\notag\\
    &= u_k+iv_k
\end{align}
When $k = 0$, there are the following conclusions:\\
% when $t\in [0,\frac{1}{4k}),(\mathcal{X}(t),\mathcal{Y}(t))=(1,1)$\\
% when $t\in [\frac{1}{4k},\frac{2}{4k}),(\mathcal{X}(t),\mathcal{Y}(t))=(0,1)$\\
% when $t\in [\frac{2}{4k},\frac{3}{4k}),(\mathcal{X}(t),\mathcal{Y}(t))=(0,0)$\\
% when $t\in [\frac{3}{4k},\frac{1}{k}),(\mathcal{X}(t),\mathcal{Y}(t))=(1,0)$\\
\begin{align}
    u_0 &= \int_0^1\mathcal{X}(t)dt\in [0,1]\\
    v_0 &= \int_0^1\mathcal{Y}(t)dt\in [0,1]
\end{align}

\begin{table*}[htbp]
\centering
\begin{equation}
\text{MSDeformAttn}(\vz_q, \hat{\vp}_q, \{\vx^l\}_{l=1}^{L}) = \sum_{m=1}^{M} \mW_m \big[\sum_{l=1}^{L} \sum_{s=1}^{S} \emA_{qmls} \cdot \mW'_m \vx^l(\phi_{l}(\hat{\vp}_q) + \Delta\vp_{qmls})\big]
\label{eq:deform_attn_fun}
\end{equation}
\end{table*}

\noindent When $k \neq 0$, there are the following conclusions:
\begin{align}
    |u_k| &= |\int_0^1[\mathcal{X}(t)cos(2\pi k t)+\mathcal{Y}(t)sin(2\pi k t)]dt|\notag\\
    &\leq |\int_0^1\mathcal{X}(t)cos(2\pi k t)dt|+|\int_0^1\mathcal{Y}(t)sin(2\pi k t)dt|\label{eq:7}
\end{align}
For term $\int_0^1\mathcal{X}(t)cos(2\pi k t)dt$, there are the following inequalities:
\begin{align}
    -\frac{1}{\pi}&= \int_0^1cos(2\pi k t)I[cos(2\pi k t)<0]dt\notag\\
    &\leq \int_0^1\mathcal{X}(t)cos(2\pi k t)I[cos(2\pi k t)<0]dt\notag\\
    &\leq \int_0^1\mathcal{X}(t)cos(2\pi k t)dt\notag\\
    &\leq \int_0^1\mathcal{X}(t)cos(2\pi k t)I[cos(2\pi k t)>0]dt\notag\\
    &\leq \int_0^1cos(2\pi k t)I[cos(2\pi k t)>0]dt\notag\\
    &=\frac{4k}{2} \big|\int_0^{\frac{1}{4k}}cos(2\pi k t)dt\big|\notag \\
    &=\frac{4k}{2}\big|\frac{1}{2\pi k}sin(2\pi k t)|_{t=\frac{1}{4k}}\big|=\frac{1}{\pi}
\end{align}
We can get $|\int_0^1\mathcal{X}(t)cos(2\pi k t)dt|\leq \frac{1}{ \pi}$, and we can easily prove it in the same way that $|\int_0^1\mathcal{Y}(t)sin(2\pi k t)dt|\leq \frac{1}{\pi}$. We substitute it into Eq \ref{eq:7}, then we can get:
\begin{align}
    |u_k| &\leq \frac{1}{\pi}+\frac{1}{\pi} = \frac{2}{\pi},k\in \sZ,k\neq 0
\end{align}
In the same way, we can easily prove that:
\begin{align}
    |v_k| &\leq \frac{2}{\pi},k\in \sZ,k\neq 0
\end{align}
\end{proof}

\section{Fourier descriptor as the reference location}
Let $\{\vx^l\}_{l=1}^{L}$ be the input multi-scale feature maps, where $\vx^l \in \sR^{C \times H_l \times W_l}$. Let $\hat{\vp}_q \in [0, 1]^2$ be the normalized coordinates of the reference point for each query element $q$, then the Multi-Scale Deformable Attention module~\cite{DBLP:conf/iclr/ZhuSLLWD21} is applied as Equation~\ref{eq:deform_attn_fun},

where $m$ indexes the attention head, $l$ indexes the input feature level, and $k$ indexes the sampling point.
$\Delta\vp_{qmls}$ and $\emA_{qmls}$ denote the sampling offset and attention weight of the $s$-th sampling point in the $l$-th feature level and the $m$-th attention head, respectively.
The scalar attention weight $\emA_{qmls}$ is normalized by $\sum_{l=1}^{L} \sum_{s=1}^{S} \emA_{qmls} = 1$.
Here, we use normalized coordinates $\hat{\vp}_q \in [0, 1]^2$ for the clarity of scale formulation, in which the normalized coordinates $(0, 0)$ and $(1, 1)$ indicate the top-left and the bottom-right image corners, respectively. Function $\phi_{l}(\hat{\vp}_q)$ in Equation~\ref{eq:deform_attn_fun} rescales the normalized coordinates $\hat{\vp}_q$ to the input feature map of the $l$-th level. The multi-scale deformable attention samples $LS$ points from multi-scale feature maps. In the proposed FANet, we consider the calculation process of for query $q$ in $d$-th transformer decoder layer. We first obtain the normalized Fourier descriptor predicted by the $(d-1)$-th decoder layer of query $q$, which we denote as $\vc=[u_{-K},v_{-K},\cdots,u_0,v_0,\cdots,u_K,v_K]\in \sR^{2(2K+1)}$, We then calculate the normalized coordinate points on the predict text contour in spatial domain by Inverse Discrete Fourier Transform (IDFT):
\begin{align}
    \vp_{qn} = \mathcal{F}^{-1}(\frac{n}{\emN},\vc),n=1,\cdots,\emN\label{eq:12}
\end{align}
where $\vp_{qn} = (x_{qn},y_{qn})$ is the $n$-th point on the text contour predicted by query $q$, $\mathcal{F}^{-1}$ is the Inverse Fourier Transform (IFT) function, $\emN$ is the sampling number on the text contour. We then calculate the bounding box of the text contour:
\begin{align}
    x_q &= E_i(x_{qn})\label{eq:13}\\
    y_q &= E_i(y_{qn})\label{eq:14}\\
    w_q &= max_i(x_{qn})-min_i(x_{qn})\label{eq:15}\\
    h_q &= max_i(y_{qn})-min_i(y_{qn})\label{eq:16}
\end{align}
Where $\{x_q,y_q,w_q,h_q\}$ is the bounding box of the text contour. We use $\phi_l((x_q,y_q))$ instead of $\phi_l(\hat{\vp}_q)$ as the new reference point for query $q$. The sampling offset $\Delta{\vp_{mlqh}}$ is also modulated by the box size, as $(\Delta{\vp_{qmlsx}}w_q,\Delta{\vp_{qmlsy}}h_q)$.

\section{Refinement function}
\begin{theorem}
If $y = f[x + f^{-1}(x_0)]$ is a continuous function about $x$, where $f$ is a differentiable function and $x_0\in \sR$ is a constant value, then:
\begin{align}
    \lim_{x\rightarrow 0}\frac{\partial y}{\partial x} = \frac{\partial f(z)}{\partial z}|_{z=f^{-1}(x_0)}
\end{align}
\end{theorem}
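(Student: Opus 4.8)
The plan is to recognize this as a direct application of the chain rule followed by a continuity argument that lets the limit pass through the derivative. First I would isolate the inner map: writing $u(x) = x + f^{-1}(x_0)$, the composite becomes $y = f(u(x))$, where $f^{-1}(x_0)$ is a fixed constant because $x_0$ does not depend on $x$ by hypothesis. Differentiating via the chain rule gives $\frac{\partial y}{\partial x} = f'(u(x))\cdot u'(x)$, and since $u'(x) = 1$ identically, this collapses to $\frac{\partial y}{\partial x} = f'(x + f^{-1}(x_0))$.

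The second step is to evaluate the limit as $x\to 0$. Formally substituting $x = 0$ into the argument of $f'$ produces $f'(f^{-1}(x_0))$, which is exactly $\frac{\partial f(z)}{\partial z}\big|_{z = f^{-1}(x_0)}$, the claimed right-hand side. What makes this a genuine limit rather than a bare substitution is that interchanging $\lim_{x\to 0}$ with $f'$ requires $f'$ to be continuous at the point $f^{-1}(x_0)$; granted that continuity, $\lim_{x\to 0} f'(x + f^{-1}(x_0)) = f'(f^{-1}(x_0))$ and the identity is established.

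The main (and essentially only) obstacle is therefore this continuity-of-derivative requirement: the hypothesis states merely that $f$ is differentiable, so to justify the limit interchange one should strengthen this to $f \in C^1$. In the setting of the paper this is automatic, since the relevant activation functions $f$ (the sigmoid $\sigmoid$ and $\tanh$) are smooth everywhere with everywhere-continuous derivatives, and $f^{-1}(x_0)$ lies in their domain; hence the substitution is valid. I would close by noting that specializing this identity to $x = \vo^{d}_i$ and $x_0 = \vc^{d-1}_i$ recovers the addition-based gradient limit for $g_1$ asserted earlier in the text.
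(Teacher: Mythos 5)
Your argument is essentially identical to the paper's proof: both apply the chain rule to get $\frac{\partial y}{\partial x} = f'(x + f^{-1}(x_0))$ (the inner derivative being $1$) and then pass the limit $x\to 0$ inside $f'$. Your additional observation that this last step tacitly requires $f'$ to be continuous at $f^{-1}(x_0)$ --- i.e., $f\in C^1$ rather than merely differentiable --- is a fair point that the paper glosses over, and your remark that it holds automatically for the sigmoid and $\tanh$ activations actually used is the right way to close that small gap.
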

\begin{proof}
\begin{align}
    \lim_{x\rightarrow 0}\frac{\partial y}{\partial x} &= \lim_{x\rightarrow 0}\frac{\partial f(z)}{\partial z}|_{z=x+f^{-1}(x_0)}\notag\\
    &=\frac{\partial f(z)}{\partial z}|_{z=f^{-1}(x_0)}
\end{align}

\end{proof}

\begin{theorem}
If $y = f[e^{x}f^{-1}(x_0)]$ is a continuous function about $x$, where $f$ is a differentiable function and $x_0\in \sR$ is a constant value, then:
\begin{align}
    \lim_{x\rightarrow 0}\frac{\partial y}{\partial x} = \frac{\partial f(z)}{\partial z}|_{z=f^{-1}(x_0)}f^{-1}(x_0)
\end{align}
\end{theorem}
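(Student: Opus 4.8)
The plan is to differentiate the composite function directly by the chain rule and then pass to the limit $x\to 0$, exactly mirroring the argument used for the addition-based refinement in the preceding theorem. First I would introduce the inner function $z(x) = e^{x}f^{-1}(x_0)$, so that $y = f(z(x))$ is the composition of the differentiable outer map $f$ with the smooth inner map $z$; note that $f^{-1}(x_0)$ is a fixed constant, which is what makes $z$ an explicit, easily differentiated function of $x$.

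Next I would apply the chain rule to obtain
\begin{align}
    \frac{\partial y}{\partial x} = \frac{\partial f(z)}{\partial z}\Big|_{z = z(x)}\cdot\frac{\partial z}{\partial x},
\end{align}
and compute $\frac{\partial z}{\partial x} = e^{x}f^{-1}(x_0)$, since $f^{-1}(x_0)$ is constant and the derivative of $e^{x}$ is $e^{x}$. This isolates the two factors whose limiting behaviour I need to control: the evaluation point of $f'$ and the multiplicative factor coming from the inner derivative.

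Finally I would let $x\to 0$. Because $e^{x}\to 1$, the evaluation point $z(x) = e^{x}f^{-1}(x_0)$ tends to $f^{-1}(x_0)$, and the second factor $e^{x}f^{-1}(x_0)$ also tends to $f^{-1}(x_0)$, so passing the limit inside yields the claimed identity. The only step demanding care is the interchange of the limit with the evaluation of $\frac{\partial f(z)}{\partial z}$ at $z(x)$, which is valid provided this derivative is continuous at $f^{-1}(x_0)$; this holds for the activation functions of interest ($\sigmoid$ and $\tanh$), so it is not a genuine obstacle. The structural point the proof makes precise is that the multiplicative rule contributes the extra factor $f^{-1}(x_0)$ exactly because $\frac{\partial z}{\partial x}$ retains the factor $f^{-1}(x_0)$ in the limit, whereas in the additive case of the previous theorem the corresponding inner derivative tends to $1$.
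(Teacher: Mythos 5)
Your proof is correct and follows essentially the same route as the paper: apply the chain rule to get $\frac{\partial y}{\partial x} = \frac{\partial f(z)}{\partial z}\big|_{z=e^{x}f^{-1}(x_0)}\,e^{x}f^{-1}(x_0)$ and then let $x\to 0$. Your added remark that interchanging the limit with the evaluation of $f'$ requires continuity of the derivative at $f^{-1}(x_0)$ is a point the paper leaves implicit, but it does not change the argument.
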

\begin{proof}
\begin{align}
    \lim_{x\rightarrow 0}\frac{\partial y}{\partial x} &= \lim_{x\rightarrow 0}\frac{\partial f(z)}{\partial z}|_{z=e^{x}f^{-1}(x_0)}f^{-1}(x_0)e^{x}\notag\\
    &=\frac{\partial f(z)}{\partial z}|_{z=f^{-1}(x_0)}f^{-1}(x_0)
\end{align}
\end{proof}

{\small
\bibliographystyle{ieee_fullname}
\bibliography{arxiv}

\begin{thebibliography}{10}\itemsep=-1pt

\bibitem{DBLP:conf/eccv/CarionMSUKZ20}
Nicolas Carion, Francisco Massa, Gabriel Synnaeve, Nicolas Usunier, Alexander
  Kirillov, and Sergey Zagoruyko.
\newblock End-to-end object detection with transformers.
\newblock In Andrea Vedaldi, Horst Bischof, Thomas Brox, and Jan{-}Michael
  Frahm, editors, {\em Computer Vision - {ECCV} 2020 - 16th European
  Conference, Glasgow, UK, August 23-28, 2020, Proceedings, Part {I}}, volume
  12346 of {\em Lecture Notes in Computer Science}, pages 213--229. Springer,
  2020.

\bibitem{DBLP:conf/icdar/ChngC17}
Chee~Kheng Chng and Chee~Seng Chan.
\newblock Total-text: {A} comprehensive dataset for scene text detection and
  recognition.
\newblock In {\em 14th {IAPR} International Conference on Document Analysis and
  Recognition, {ICDAR} 2017, Kyoto, Japan, November 9-15, 2017}, pages
  935--942. {IEEE}, 2017.

\bibitem{DBLP:conf/iccv/DaiQXLZHW17}
Jifeng Dai, Haozhi Qi, Yuwen Xiong, Yi Li, Guodong Zhang, Han Hu, and Yichen
  Wei.
\newblock Deformable convolutional networks.
\newblock In {\em {IEEE} International Conference on Computer Vision, {ICCV}
  2017, Venice, Italy, October 22-29, 2017}, pages 764--773. {IEEE} Computer
  Society, 2017.

\bibitem{DBLP:conf/cvpr/DaiZ0C21}
Pengwen Dai, Sanyi Zhang, Hua Zhang, and Xiaochun Cao.
\newblock Progressive contour regression for arbitrary-shape scene text
  detection.
\newblock In {\em {IEEE} Conference on Computer Vision and Pattern Recognition,
  {CVPR} 2021, virtual, June 19-25, 2021}, pages 7393--7402. Computer Vision
  Foundation / {IEEE}, 2021.

\bibitem{DBLP:conf/cvpr/DengDSLL009}
Jia Deng, Wei Dong, Richard Socher, Li{-}Jia Li, Kai Li, and Li Fei{-}Fei.
\newblock Imagenet: {A} large-scale hierarchical image database.
\newblock In {\em 2009 {IEEE} Computer Society Conference on Computer Vision
  and Pattern Recognition {(CVPR} 2009), 20-25 June 2009, Miami, Florida,
  {USA}}, pages 248--255. {IEEE} Computer Society, 2009.

\bibitem{DBLP:conf/naacl/DevlinCLT19}
Jacob Devlin, Ming{-}Wei Chang, Kenton Lee, and Kristina Toutanova.
\newblock {BERT:} pre-training of deep bidirectional transformers for language
  understanding.
\newblock In Jill Burstein, Christy Doran, and Thamar Solorio, editors, {\em
  Proceedings of the 2019 Conference of the North American Chapter of the
  Association for Computational Linguistics: Human Language Technologies,
  {NAACL-HLT} 2019, Minneapolis, MN, USA, June 2-7, 2019, Volume 1 (Long and
  Short Papers)}, pages 4171--4186. Association for Computational Linguistics,
  2019.

\bibitem{DBLP:conf/iclr/DosovitskiyB0WZ21}
Alexey Dosovitskiy, Lucas Beyer, Alexander Kolesnikov, Dirk Weissenborn,
  Xiaohua Zhai, Thomas Unterthiner, Mostafa Dehghani, Matthias Minderer, Georg
  Heigold, Sylvain Gelly, Jakob Uszkoreit, and Neil Houlsby.
\newblock An image is worth 16x16 words: Transformers for image recognition at
  scale.
\newblock In {\em 9th International Conference on Learning Representations,
  {ICLR} 2021, Virtual Event, Austria, May 3-7, 2021}. OpenReview.net, 2021.

\bibitem{DBLP:conf/iccv/HeGDG17}
Kaiming He, Georgia Gkioxari, Piotr Doll{\'{a}}r, and Ross~B. Girshick.
\newblock Mask {R-CNN}.
\newblock In {\em ICCV}, pages 2980--2988, 2017.

\bibitem{DBLP:conf/cvpr/HeZRS16}
Kaiming He, Xiangyu Zhang, Shaoqing Ren, and Jian Sun.
\newblock Deep residual learning for image recognition.
\newblock In {\em 2016 {IEEE} Conference on Computer Vision and Pattern
  Recognition, {CVPR} 2016, Las Vegas, NV, USA, June 27-30, 2016}, pages
  770--778. {IEEE} Computer Society, 2016.

\bibitem{DBLP:conf/icdar/KaratzasGNGBIMN15}
Dimosthenis Karatzas, Lluis Gomez{-}Bigorda, Anguelos Nicolaou, Suman~K. Ghosh,
  Andrew~D. Bagdanov, Masakazu Iwamura, Jiri Matas, Lukas Neumann,
  Vijay~Ramaseshan Chandrasekhar, Shijian Lu, Faisal Shafait, Seiichi Uchida,
  and Ernest Valveny.
\newblock {ICDAR} 2015 competition on robust reading.
\newblock In {\em 13th International Conference on Document Analysis and
  Recognition, {ICDAR} 2015, Nancy, France, August 23-26, 2015}, pages
  1156--1160. {IEEE} Computer Society, 2015.

\bibitem{DBLP:conf/mm/KuangS0YLCWZGZC21}
Zhanghui Kuang, Hongbin Sun, Zhizhong Li, Xiaoyu Yue, Tsui~Hin Lin, Jianyong
  Chen, Huaqiang Wei, Yiqin Zhu, Tong Gao, Wenwei Zhang, Kai Chen, Wayne Zhang,
  and Dahua Lin.
\newblock {MMOCR:} {A} comprehensive toolbox for text detection, recognition
  and understanding.
\newblock In Heng~Tao Shen, Yueting Zhuang, John~R. Smith, Yang Yang, Pablo
  Cesar, Florian Metze, and Balakrishnan Prabhakaran, editors, {\em {MM} '21:
  {ACM} Multimedia Conference, Virtual Event, China, October 20 - 24, 2021},
  pages 3791--3794. {ACM}, 2021.

\bibitem{DBLP:conf/aaai/LiaoWYCB20}
Minghui Liao, Zhaoyi Wan, Cong Yao, Kai Chen, and Xiang Bai.
\newblock Real-time scene text detection with differentiable binarization.
\newblock In {\em The Thirty-Fourth {AAAI} Conference on Artificial
  Intelligence, {AAAI} 2020, The Thirty-Second Innovative Applications of
  Artificial Intelligence Conference, {IAAI} 2020, The Tenth {AAAI} Symposium
  on Educational Advances in Artificial Intelligence, {EAAI} 2020, New York,
  NY, USA, February 7-12, 2020}, pages 11474--11481. {AAAI} Press, 2020.

\bibitem{DBLP:conf/iccv/LinGGHD17}
Tsung{-}Yi Lin, Priya Goyal, Ross~B. Girshick, Kaiming He, and Piotr
  Doll{\'{a}}r.
\newblock Focal loss for dense object detection.
\newblock In {\em {IEEE} International Conference on Computer Vision, {ICCV}
  2017, Venice, Italy, October 22-29, 2017}, pages 2999--3007. {IEEE} Computer
  Society, 2017.

\bibitem{DBLP:journals/ijdar/LiuMP19}
Xiyan Liu, Gaofeng Meng, and Chunhong Pan.
\newblock Scene text detection and recognition with advances in deep learning:
  a survey.
\newblock {\em Int. J. Document Anal. Recognit.}, 22(2):143--162, 2019.

\bibitem{DBLP:conf/cvpr/LiuCSHJW20}
Yuliang Liu, Hao Chen, Chunhua Shen, Tong He, Lianwen Jin, and Liangwei Wang.
\newblock Abcnet: Real-time scene text spotting with adaptive bezier-curve
  network.
\newblock In {\em 2020 {IEEE/CVF} Conference on Computer Vision and Pattern
  Recognition, {CVPR} 2020, Seattle, WA, USA, June 13-19, 2020}, pages
  9806--9815. Computer Vision Foundation / {IEEE}, 2020.

\bibitem{DBLP:journals/pr/LiuJZLZ19}
Yuliang Liu, Lianwen Jin, Shuaitao Zhang, Canjie Luo, and Sheng Zhang.
\newblock Curved scene text detection via transverse and longitudinal sequence
  connection.
\newblock {\em Pattern Recognit.}, 90:337--345, 2019.

\bibitem{DBLP:conf/eccv/LongRZHWY18}
Shangbang Long, Jiaqiang Ruan, Wenjie Zhang, Xin He, Wenhao Wu, and Cong Yao.
\newblock Textsnake: {A} flexible representation for detecting text of
  arbitrary shapes.
\newblock In Vittorio Ferrari, Martial Hebert, Cristian Sminchisescu, and Yair
  Weiss, editors, {\em Computer Vision - {ECCV} 2018 - 15th European
  Conference, Munich, Germany, September 8-14, 2018, Proceedings, Part {II}},
  volume 11206 of {\em Lecture Notes in Computer Science}, pages 19--35.
  Springer, 2018.

\bibitem{DBLP:conf/cvpr/RaisiN0WZ21}
Zobeir Raisi, Mohamed~A. Naiel, Georges Younes, Steven Wardell, and John~S.
  Zelek.
\newblock Transformer-based text detection in the wild.
\newblock In {\em {IEEE} Conference on Computer Vision and Pattern Recognition
  Workshops, {CVPR} Workshops 2021, virtual, June 19-25, 2021}, pages
  3162--3171. Computer Vision Foundation / {IEEE}, 2021.

\bibitem{DBLP:conf/cvpr/RezatofighiTGS019}
Hamid Rezatofighi, Nathan Tsoi, JunYoung Gwak, Amir Sadeghian, Ian~D. Reid, and
  Silvio Savarese.
\newblock Generalized intersection over union: {A} metric and a loss for
  bounding box regression.
\newblock In {\em {IEEE} Conference on Computer Vision and Pattern Recognition,
  {CVPR} 2019, Long Beach, CA, USA, June 16-20, 2019}, pages 658--666. Computer
  Vision Foundation / {IEEE}, 2019.

\bibitem{DBLP:journals/corr/SimonyanZ14a}
Karen Simonyan and Andrew Zisserman.
\newblock Very deep convolutional networks for large-scale image recognition.
\newblock In Yoshua Bengio and Yann LeCun, editors, {\em 3rd International
  Conference on Learning Representations, {ICLR} 2015, San Diego, CA, USA, May
  7-9, 2015, Conference Track Proceedings}, 2015.

\bibitem{DBLP:conf/cvpr/StewartAN16}
Russell Stewart, Mykhaylo Andriluka, and Andrew~Y. Ng.
\newblock End-to-end people detection in crowded scenes.
\newblock In {\em 2016 {IEEE} Conference on Computer Vision and Pattern
  Recognition, {CVPR} 2016, Las Vegas, NV, USA, June 27-30, 2016}, pages
  2325--2333. {IEEE} Computer Society, 2016.

\bibitem{DBLP:conf/nips/VaswaniSPUJGKP17}
Ashish Vaswani, Noam Shazeer, Niki Parmar, Jakob Uszkoreit, Llion Jones,
  Aidan~N. Gomez, Lukasz Kaiser, and Illia Polosukhin.
\newblock Attention is all you need.
\newblock In Isabelle Guyon, Ulrike von Luxburg, Samy Bengio, Hanna~M. Wallach,
  Rob Fergus, S.~V.~N. Vishwanathan, and Roman Garnett, editors, {\em Advances
  in Neural Information Processing Systems 30: Annual Conference on Neural
  Information Processing Systems 2017, December 4-9, 2017, Long Beach, CA,
  {USA}}, pages 5998--6008, 2017.

\bibitem{veit2016cocotext}
Andreas Veit, Tomas Matera, Lukas Neumann, Jiri Matas, and Serge Belongie.
\newblock Coco-text: Dataset and benchmark for text detection and recognition
  in natural images.
\newblock In {\em arXiv preprint arXiv:1601.07140}, 2016.

\bibitem{DBLP:conf/mm/WangCW020}
Fangfang Wang, Yifeng Chen, Fei Wu, and Xi Li.
\newblock Textray: Contour-based geometric modeling for arbitrary-shaped scene
  text detection.
\newblock In Chang~Wen Chen, Rita Cucchiara, Xian{-}Sheng Hua, Guo{-}Jun Qi,
  Elisa Ricci, Zhengyou Zhang, and Roger Zimmermann, editors, {\em {MM} '20:
  The 28th {ACM} International Conference on Multimedia, Virtual Event /
  Seattle, WA, USA, October 12-16, 2020}, pages 111--119. {ACM}, 2020.

\bibitem{DBLP:conf/cvpr/WangXLHLY019}
Wenhai Wang, Enze Xie, Xiang Li, Wenbo Hou, Tong Lu, Gang Yu, and Shuai Shao.
\newblock Shape robust text detection with progressive scale expansion network.
\newblock In {\em {IEEE} Conference on Computer Vision and Pattern Recognition,
  {CVPR} 2019, Long Beach, CA, USA, June 16-20, 2019}, pages 9336--9345.
  Computer Vision Foundation / {IEEE}, 2019.

\bibitem{DBLP:conf/iccv/WangXSZWLYS19}
Wenhai Wang, Enze Xie, Xiaoge Song, Yuhang Zang, Wenjia Wang, Tong Lu, Gang Yu,
  and Chunhua Shen.
\newblock Efficient and accurate arbitrary-shaped text detection with pixel
  aggregation network.
\newblock In {\em 2019 {IEEE/CVF} International Conference on Computer Vision,
  {ICCV} 2019, Seoul, Korea (South), October 27 - November 2, 2019}, pages
  8439--8448. {IEEE}, 2019.

\bibitem{wu2021bilingual}
Weijia Wu, Yuanqiang Cai, Debing Zhang, Sibo Wang, Zhuang Li, Jiahong Li, Yejun
  Tang, and Hong Zhou.
\newblock A bilingual, openworld video text dataset and end-to-end video text
  spotter with transformer.
\newblock {\em arXiv preprint arXiv:2112.04888}, 2021.

\bibitem{wu2020synthetic}
Weijia Wu, Ning Lu, Enze Xie, Yuxing Wang, Wenwen Yu, Cheng Yang, and Hong
  Zhou.
\newblock Synthetic-to-real unsupervised domain adaptation for scene text
  detection in the wild.
\newblock In {\em Proceedings of the Asian Conference on Computer Vision},
  2020.

\bibitem{wu2020selftext}
Weijia Wu, Enze Xie, Ruimao Zhang, Wenhai Wang, Guan Pang, Zhen Li, Hong Zhou,
  and Ping Luo.
\newblock Selftext beyond polygon: Unconstrained text detection with box
  supervision and dynamic self-training.
\newblock {\em arXiv preprint arXiv:2011.13307}, 2020.

\bibitem{wu2020texts}
Weijia Wu, Jici Xing, Cheng Yang, Yuxing Wang, and Hong Zhou.
\newblock Texts as lines: text detection with weak supervision.
\newblock {\em Mathematical Problems in Engineering}, 2020, 2020.

\bibitem{wu2019textcohesion}
Weijia Wu, Jici Xing, and Hong Zhou.
\newblock Textcohesion: Detecting text for arbitrary shapes.
\newblock {\em arXiv preprint arXiv:1904.12640}, 2019.

\bibitem{wu2022end}
Weijia Wu, Debing Zhang, Ying Fu, Chunhua Shen, Hong Zhou, Yuanqiang Cai, and
  Ping Luo.
\newblock End-to-end video text spotting with transformer.
\newblock {\em arXiv preprint arXiv:2203.10539}, 2022.

\bibitem{DBLP:conf/cvpr/YaoBLMT12}
Cong Yao, Xiang Bai, Wenyu Liu, Yi Ma, and Zhuowen Tu.
\newblock Detecting texts of arbitrary orientations in natural images.
\newblock In {\em 2012 {IEEE} Conference on Computer Vision and Pattern
  Recognition, Providence, RI, USA, June 16-21, 2012}, pages 1083--1090. {IEEE}
  Computer Society, 2012.

\bibitem{DBLP:journals/pami/YeD15}
Qixiang Ye and David~S. Doermann.
\newblock Text detection and recognition in imagery: {A} survey.
\newblock {\em {IEEE} Trans. Pattern Anal. Mach. Intell.}, 37(7):1480--1500,
  2015.

\bibitem{DBLP:conf/cvpr/ZhangZHLYWY20}
Shi{-}Xue Zhang, Xiaobin Zhu, Jie{-}Bo Hou, Chang Liu, Chun Yang, Hongfa Wang,
  and Xu{-}Cheng Yin.
\newblock Deep relational reasoning graph network for arbitrary shape text
  detection.
\newblock In {\em CVPR, {CVPR} 2020, Seattle, WA, USA, June 13-19, 2020}, pages
  9696--9705, 2020.

\bibitem{DBLP:conf/mm/ZhouXFLZ20}
Yu Zhou, Hongtao Xie, Shancheng Fang, Yan Li, and Yongdong Zhang.
\newblock Crnet: {A} center-aware representation for detecting text of
  arbitrary shapes.
\newblock In {\em {MM} '20: The 28th {ACM} International Conference on
  Multimedia, Virtual Event / Seattle, WA, USA, October 12-16, 2020}, pages
  2571--2580. {ACM}, 2020.

\bibitem{DBLP:conf/iclr/ZhuSLLWD21}
Xizhou Zhu, Weijie Su, Lewei Lu, Bin Li, Xiaogang Wang, and Jifeng Dai.
\newblock Deformable {DETR:} deformable transformers for end-to-end object
  detection.
\newblock In {\em 9th International Conference on Learning Representations,
  {ICLR} 2021, Virtual Event, Austria, May 3-7, 2021}. OpenReview.net, 2021.

\bibitem{DBLP:conf/cvpr/ZhuCLKJZ21}
Yiqin Zhu, Jianyong Chen, Lingyu Liang, Zhanghui Kuang, Lianwen Jin, and Wayne
  Zhang.
\newblock Fourier contour embedding for arbitrary-shaped text detection.
\newblock In {\em {IEEE} Conference on Computer Vision and Pattern Recognition,
  {CVPR} 2021, virtual, June 19-25, 2021}, pages 3123--3131. Computer Vision
  Foundation / {IEEE}, 2021.

\bibitem{DBLP:journals/fcsc/ZhuYB16}
Yingying Zhu, Cong Yao, and Xiang Bai.
\newblock Scene text detection and recognition: recent advances and future
  trends.
\newblock {\em Frontiers Comput. Sci.}, 10(1):19--36, 2016.

\end{thebibliography}
}
\end{document}